\documentclass{article} 

\PassOptionsToPackage{table}{xcolor}
\usepackage{iclr2027_conference,times}

\usepackage{amsmath,amsfonts,bm}

\def\eqref#1{equation~\ref{#1}}

\def\1{\bm{1}}

\DeclareMathAlphabet{\mathsfit}{\encodingdefault}{\sfdefault}{m}{sl}
\SetMathAlphabet{\mathsfit}{bold}{\encodingdefault}{\sfdefault}{bx}{n}

\usepackage{hyperref}
\usepackage{url}
\usepackage{graphicx}
\usepackage{amsthm,needspace}
\usepackage{booktabs,tabularx,multirow,placeins}
\usepackage[table]{xcolor}
\usepackage[most]{tcolorbox}
\newtheorem{lemma}{Lemma}[section]
\newtheorem{theorem}{theorem}[section]

\definecolor{frameworkink}{RGB}{38,57,76}
\definecolor{frameworktint}{RGB}{241,245,249}
\definecolor{frameworkforward}{HTML}{0072B2}
\definecolor{frameworkreverse}{HTML}{B45309}
\definecolor{frameworkpolicy}{HTML}{8E4585}
\newcommand{\forwardKL}{\textcolor{frameworkforward}{Forward KL}}
\newcommand{\reverseKL}{\textcolor{frameworkreverse}{Reverse KL}}
\newcommand{\targetpolicy}{\textcolor{frameworkpolicy}{\pi_\theta}}
\newcommand{\algorithmgroup}[1]{%
  \specialrule{\lightrulewidth}{2pt}{0pt}%
  \multicolumn{4}{l}{\textcolor{frameworkink}{\textbf{#1}}}\\%
  \specialrule{\lightrulewidth}{0pt}{2pt}}

\definecolor{takeawayaccent}{HTML}{3D647A}
\definecolor{takeawaybackground}{HTML}{F1F5F8}
\newtcolorbox{insight}{
  enhanced,
  colback=takeawaybackground,
  coltext=black,
  frame hidden,
  borderline west={1.5pt}{0pt}{takeawayaccent},
  sharp corners,
  boxrule=0pt,
  boxsep=0pt,
  left=10pt, right=10pt, top=8pt, bottom=8pt,
  before skip=10pt, after skip=10pt,
  fontupper=\normalfont\small,
  halign=left,
  before upper={\textcolor{takeawayaccent}{\bfseries Insight.}\enspace},
}

\title{A Distribution-Matching Perspective on LLM Training Algorithms}

\author{%
Yanxing Huang$^{1,3}$,\,Bingzhang Yan$^{2,*}$,\,Zihan Tang$^{1,*}$,\,Wenzhe Li$^{5}$,\,Peng Li$^{3,\dagger}$,\,Yang Liu$^{3,4,\dagger}$\\[4pt]
{\normalfont\small $^{1}$Qiuzhen College, Tsinghua University, Beijing, China}\\
{\normalfont\small $^{2}$Weiyang College, Tsinghua University, Beijing, China}\\
{\normalfont\small $^{3}$Institute for AI Industry Research (AIR), Tsinghua University, Beijing, China}\\
{\normalfont\small $^{4}$Dept.\ of Comp.\ Sci\ \& Tech., Institute for AI, Tsinghua University, Beijing, China}\\
{\normalfont\small $^{5}$Department of Mathematical Science, Tsinghua University, Beijing, China}\\[3pt]
{\normalfont\small $^{*}$Equal contribution.\quad $^{\dagger}$Co-corresponding authors.}\\
{\normalfont\small\hypersetup{hidelinks}%
\href{mailto:hyx25@mails.tsinghua.edu.cn}{\texttt{hyx25@mails.tsinghua.edu.cn}}\quad
\href{mailto:lipeng@air.tsinghua.edu.cn}{\texttt{lipeng@air.tsinghua.edu.cn}}\quad
\href{mailto:liuyang2011@tsinghua.edu.cn}{\texttt{liuyang2011@tsinghua.edu.cn}}}%
}

\iclrfinalcopy 
\begin{document}

\maketitle
\fancyhead[L]{Preprint}

\begin{abstract}
In this work, we propose a unified framework for training algorithms across different stages of modern large language model development, based on the perspective of distribution matching. We show that a broad class of language model training algorithms can be interpreted as minimizing a certain divergence between the current model and a target distribution, and that characterizing the properties of this target distribution provides substantial insight into the behavior of the corresponding training algorithms.
By analyzing specific forms of the target distribution, we establish that On-Policy Distillation (OPD) enjoys an advantage over classical Knowledge Distillation (KD) when the current model distribution is close to the target distribution, while exhibiting a disadvantage when the two distributions are far apart. Furthermore, when the target distribution itself depends on the model being optimized, we show that the accumulation of estimation variance can induce a deterministic degradation of the target distribution.
These results provide a theoretical explanation, at least in part, for the long-term behavior of existing Reinforcement Learning (RL) algorithms. With this intuition we further designed Conditional KL (CKL) regularization for mitigating the degradation of the target distribution. CKL can effectively preserve mode diversity while relaxing the constraint imposed by classical KL regularization on reward improvements. We believe this theoretical perspective can bring some new understanding of language model training algorithms and provide useful guidance for future research.
\end{abstract}

\section{Introduction}

Modern Large Language Models (LLMs) require a complex training pipeline with different algorithms from pre-training to post-training~\citep{team_kimi_2026, deepseek-ai_deepseek-v4_2026, glm-5-team_glm-5_2026}. These algorithms are motivated by different intuitions. For example, the objectives of pre-training and Supervised Fine-Tuning (SFT) arise directly from distribution matching under the cross-entropy loss; On Policy Distillation (OPD) is constructed around on-policy sampling~\citep{lochab_uniform-correct_2026}; Reinforcement Learning (RL) aims to maximize the expected reward~\citep{williams_simple_1992}; and MaxRL can be viewed as arising from maximum-likelihood estimation of correct solutions~\citep{tajwar_maximum_2026}. We hope to identify the commonalities and essential differences among these algorithms, which will provide a more unified understanding and insight into the study of their properties.

Many of the training algorithms of LLMs already admit a distribution matching interpretation~\citep{korbak_reinforcement_2022, khalifa_distributional_2021}, whereas RL is often viewed fundamentally distinct. In this work we first studied the property of an ideal update step in RL and show that, under binary rewards, it preserves the conditional distribution associated with a given reward value. Moreover, if we consider the forward KL divergence between the current model and the true conditional distribution, we will obtain the training objective of MaxRL~\citep{tajwar_maximum_2026}, which is closely related to RL. Similar results can also be extended to RL with nonbinary rewards.

This provides a unified perspective that places the mainstream training algorithms for LLMs within a common distribution-matching framework. Analyzing how the target distribution changes and affects the learning dynamics under different conditions can yield a variety of interesting theoretical results that characterize the properties of these algorithms.

Depending on whether the target distribution being optimized depends on the current model distribution, these algorithms can be further categorized into iterative and non-iterative methods. In what follows, we progressively examine the properties of three classes of algorithms, from simpler to more complex cases: non-iterative methods, iterative methods with zero-mean gradients, and iterative methods with nonzero-mean gradients.

For the first class, we mainly study OPD and SFT-based knowledge distillation~\citep{hinton_distilling_2015} algorithms. By analyzing the variance induced by different target distributions under different assumptions, we roughly classified the boundary of effectiveness of these algorithms, providing a theoretical explanation for previous empirical observations regarding the stability of distillation algorithms~\citep{li_rethinking_2026, oh_kl_2026}.

The second class primarily consists of the iterative self-distillation algorithms. These methods lie between distillation and RL, while retaining theoretical properties that are more amenable to analysis. Within this class, we identify an intriguing \emph{variance accumulation} phenomenon in iterative training, which leads to a monotonic decrease in the expected entropy of the target distribution. In contrast to previous local results derived from training dynamics~\citep{cui_entropy_2025, ren_learning_2025, jin_revisiting_2026}, our analysis further shows that, although the model entropy may either increase or decrease after an individual update, its long-term behavior exhibits a systematic downward trend under the variance accumulation.

The third class mainly concerns of RL algorithms. In realistic training settings, however, the target distribution invariance of RL and MaxRL under idealized conditions is no longer preserved. Their target distributions instead exhibit collapse, driven in part by effects such as variance accumulation. Such target distribution drift serves as an important manifestation of the discrepancy between practical training dynamics and their idealized counterparts. inspired by the analysis on the target distribution, we explored Conditional KL (CKL) regularization in RL. CKL regularization directly controls the drift of the target distribution in RL, preserving the diversity of correct samples with relaxed restriction on reward gains.

\section{A Unified Framework of LLM Training Algorithms}

\subsection{Pre-Training And Distillation Algorithms}

In pre-training and Supervised Fine-Tuning (SFT), minimizing the standard cross-entropy loss is equivalent to minimizing the forward KL divergence from the data distribution to the model distribution. To make this precise, let $\mathcal{D}=\{x^{(i)}\}_{i=1}^N$ be a fixed, nonempty dataset of token sequences, let $p_{\mathcal{D}}(x)=N^{-1}\sum_{i=1}^N\mathbf{1}\{x^{(i)}=x\}$ be its empirical distribution, and let $\pi_\theta(x)$ be the model probability of sequence $x$ with parameter $\theta$. Then

\begin{align}
 D_{\mathrm{KL}}(p_{\mathcal{D}}\Vert\pi_\theta)
 &= \mathbb{E}_{x\sim p_{\mathcal{D}}}
    \left[\log\frac{p_{\mathcal{D}}(x)}{\pi_\theta(x)}\right] \notag = -\mathbb{E}_{x\sim p_{\mathcal{D}}}[\log\pi_\theta(x)]
    -H(p_{\mathcal{D}}) \notag\\
 &= \mathcal{L}_{\mathrm{CE}}(\theta)-H(p_{\mathcal{D}}),
 \label{eq:ce-forward-kl}
\end{align}

Here the data entropy $H(p_{\mathcal{D}})=-\mathbb{E}_{x\sim p_{\mathcal{D}}} [\log p_{\mathcal{D}}(x)]$ is finite and independent of $\theta$ which has no effect on the gradient. This proves the equivalence.

For SFT, let $x$ denote a prompt and $y$ its target response, with fixed empirical joint distribution $p_{\mathcal{D}}(x,y)$. The same decomposition holds conditionally:

\begin{align}
  \mathcal{L}_{\mathrm{SFT}}(\theta) &= 
  -\mathbb{E}_{(x,y)\sim p_{\mathcal{D}}}
  [\log\pi_\theta(y\mid x)]\\
  &=\mathbb{E}_{x\sim p_{\mathcal{D}}}
 \left[D_{\mathrm{KL}}\bigl(p_{\mathcal{D}}(\cdot\mid x)
 \Vert\pi_\theta(\cdot\mid x)\bigr)\right]
 +H_{p_{\mathcal{D}}}(Y\mid X),
 \label{eq:sft-conditional-kl}
\end{align}

Under the autoregressive factorization,
$-\log\pi_\theta(y\mid c)=-\sum_{t=1}^{|y|}
\log\pi_\theta(y_t\mid c,y_{<t})$, this sequence-level objective is exactly the sum of token-level
cross-entropies with ground-truth prefixes.

In many applications the response could come from a teacher model $\pi_t$, instead of the dataset. Then the training objective of SFT would become minimizing the forward KL divergence between the teacher model and the student model. This yields the classical SFT-based Knowledge-Distillation (KD) algorithm~\citep{hinton_distilling_2015}, which is often simply called SFT as well:

\begin{align}
  \mathcal{J}_{\mathrm{KD}} &= -\mathbb{E}_{x\sim\mathcal{D}} [D_{\mathrm{KL}}\bigl(\pi_t (\cdot\mid x)\Vert \pi_\theta (\cdot\mid x)\bigr)] = \mathbb{E}_{x\sim\mathcal{D}, y\sim\pi_t (\cdot\mid x)} [\log \pi_\theta (y\mid x)] - H(\pi_t).
  \label{eq:kl-loss}
\end{align}

OPD recently emerged as a new algorithm for distillation tasks~\citep{thinking_machines_lab_-policy_2025}. It can exhibit superior performance than SFT in some cases, and received great attention from the industry~\citep{team_kimi_2026, deepseek-ai_deepseek-v4_2026, glm-5-team_glm-5_2026}. While in some cases this algorithm may also be unstable and failed to complete the distillation task~\citep{li_rethinking_2026}. The key difference between OPD and KD is that, OPD uses sequence-level reverse KL divergence as the training objective:

\begin{align}
  \mathcal{J}_{\mathrm{OPD}} &= -\mathbb{E}_{x\sim\mathcal{D}} [D_{\mathrm{KL}}\bigl(\pi_\theta (\cdot\mid x)\Vert \pi_t (\cdot\mid x)\bigr)]
  = \mathbb{E}_{x\sim\mathcal{D}, y\sim\pi_\theta (\cdot\mid x)} [\log \frac{\pi_t (y\mid x)}{\pi_\theta (y\mid x)}].
  \label{eq:opd-loss}
\end{align}


We should note that the sequence-level KL objective directly yields sampled-token OPD. While there exist several variants of this OPD~\citep{li_rethinking_2026}, this sampled-token OPD is indeed adopted by the industry~\citep{team_kimi_2026} and share some important properties with other variants. the following analysis will primarily be built upon this algorithm.

\subsection{Reinforce Learning Algorithms}


The objective of RL is to maximize the expected reward $r$ given to each possible action of the policy model, while for LLMs an action is typically a sequence generation. Its standard objective and gradient are~\citep{williams_simple_1992}:

\begin{equation}
  \mathcal{J}_{\mathrm{RL}} = \mathbb{E}_{x\sim\mathcal{D}, y\sim\pi_\theta (\cdot\mid x)} [r(x, y)],\qquad \nabla_\theta \mathcal{J}_{\mathrm{RL}} = \mathbb{E}_{x\sim\mathcal{D}, y\sim\pi_\theta (\cdot\mid x)} [r(x, y) \nabla_\theta \log \pi_\theta].
  \label{eq:std-rfalgo}
\end{equation}

Modern reinforcement learning algorithms such as PPO~\citep{schulman_proximal_2017} and GRPO~\citep{shao_deepseekmath_2024} additionally introduce various methods, including advantage estimation and importance sampling techniques, to better achieve stable training, but they are essentially optimizing the same training objective.


In fact, this objective also implicitly introduces a target distribution. We first characterize an ideal policy update that improves expected reward
while remaining close to the current policy under a forward-KL restriction. (Similar result also holds for reverse-KL restriction)

\begin{theorem}[Reward reweighting under a forward-KL restriction]
\label{lem:reward-reweighting}
For a prompt $x$, the ideal update maximizing
$\mathbb E_{y\sim q}[r(x,y)]-\beta D_{\mathrm{KL}}(\pi_\theta\Vert q)$
over response distributions $q$, with KL penalty $\beta>0$, is
\begin{equation}
 q^*(y\mid x)
 =\frac{\beta\pi_\theta(y\mid x)}{\lambda_x-r(x,y)},
 \label{eq:reward-reciprocal-tilt}
\end{equation}
where $\lambda_x$ normalizes the distribution. Responses with the same
reward receive the same probability multiplier, preserving their
conditional distribution.
\end{theorem}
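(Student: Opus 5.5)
We maximize the concave functional
$F(q)=\sum_y q(y)r(x,y)-\beta\sum_y \pi_\theta(y\mid x)\log\frac{\pi_\theta(y\mid x)}{q(y)}$
over the probability simplex, treating the response space as countable and fixing $x$ throughout. Only $q$ on the support of $\pi_\theta(\cdot\mid x)$ matters for the KL term. Putting mass outside that support only wastes probability unless it sits on responses of maximal reward. We will therefore restrict to $q\ll\pi_\theta$ and comment on this at the end. The first term of $F$ is linear in $q$. The second is $\beta\sum_y\pi_\theta(y)\log q(y)$ up to a constant, which is strictly concave in $q$ on the support. Hence $F$ is strictly concave there, and any stationary point of the Lagrangian is the unique maximizer.

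We introduce a multiplier $\lambda_x$ for the constraint $\sum_y q(y)=1$ and form
$\mathcal L(q,\lambda_x)=F(q)-\lambda_x\bigl(\sum_y q(y)-1\bigr)$.
Differentiating in $q(y)$ gives the stationarity condition
$r(x,y)+\beta\pi_\theta(y\mid x)/q(y)-\lambda_x=0$.
Solving for $q(y)$ yields exactly $q^*(y\mid x)=\beta\pi_\theta(y\mid x)/(\lambda_x-r(x,y))$.
Nonnegativity imposes no separate constraint: the $\log q$ term acts as a barrier and forces $q>0$ on the support. So the KKT conditions reduce to this equation together with $\lambda_x>\sup_y r(x,y)$, which keeps every denominator positive.

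The step needing real care is the existence and uniqueness of the normalizer. Consider $g(\lambda)=\sum_y \beta\pi_\theta(y\mid x)/(\lambda-r(x,y))$ on $(r_{\max},\infty)$, where $r_{\max}$ is the essential supremum of $r$ under $\pi_\theta$. The function $g$ is continuous and strictly decreasing, and it tends to $0$ as $\lambda\to\infty$. As $\lambda\downarrow r_{\max}$, its limit is $\beta\,\mathbb E_{\pi_\theta}[1/(r_{\max}-r)]$, which is $+\infty$ whenever the maximal reward has positive $\pi_\theta$-mass. That always holds for binary or finitely-valued rewards. In that case the intermediate value theorem gives a unique $\lambda_x$ with $g(\lambda_x)=1$.

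If instead the limit is finite and below $1$, the optimum puts the remaining mass on maximal-reward responses outside the support. We would flag this as a degenerate case excluded by the standing assumption, for instance finitely many reward values attained under $\pi_\theta$. We expect this case analysis, rather than the Lagrangian computation, to be the main obstacle to a fully rigorous statement. Sufficiency of the stationary point follows from concavity, with no second-order check needed.

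Finally, the multiplier $q^*(y)/\pi_\theta(y)=\beta/(\lambda_x-r(x,y))$ depends on $y$ only through $r(x,y)$. Conditioning on any reward level $r_0$ therefore gives
$q^*(y\mid x, r=r_0)=\pi_\theta(y\mid x)\mathbf 1\{r(x,y)=r_0\}/\pi_\theta(r=r_0\mid x)$.
This is the claimed preservation of the conditional distribution. The reverse-KL remark follows the same route with stationarity condition $r-\beta\log(q/\pi_\theta)-\beta=\lambda$, which gives the exponential tilt $q\propto\pi_\theta e^{r/\beta}$; it has the same reward-only multiplier property.
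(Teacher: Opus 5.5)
Your proof is correct and follows the paper's argument essentially step for step: strict concavity with the $\log q$ barrier forcing an interior maximizer, the stationarity condition $r(x,y)+\beta\pi_\theta(y\mid x)/q(y)=\lambda_x$, a unique normalizer because the normalization sum decreases continuously from $+\infty$ to $0$ on $(\max_y r(x,y),\infty)$, and cancellation of the common multiplier on each reward level set. One caution about your added generality: the paper assumes a finite response space and a strictly positive $\pi_\theta$, and that is what makes your restriction to $q\ll\pi_\theta$ harmless; without full support the restriction can fail even when $g$ blows up at $r_{\max}$, since the restricted solution stops being optimal exactly when some off-support response has reward exceeding $\lambda_x=\mathbb E_{q^*}[r]+\beta$ (not merely a maximal reward), and your closing remark does not address this case.
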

\noindent\hyperref[app:reward-reweighting]{Proof: Appendix~\ref*{app:reward-reweighting}.}

Under binary rewards, we can verify that this update preserves the true conditional distribution
$\pi_\theta^+(\cdot\mid x)
=\pi_\theta(\cdot\mid x,r(x,y)=1)$. If we use this distribution as the training target, matching
this conditional distribution with the forward KL gives the following identity.

\Needspace{9\baselineskip}
\begin{theorem}[True conditional KL and the MaxRL]
\label{lem:conditional-kl-maxrl}
Let $\rho_\theta(x)=\mathbb E_{y\sim\pi_\theta(\cdot\mid x)}[r(x,y)]$
denote the success probability. Matching the true conditional target
$\pi_\theta^+$ by minimizing its forward KL to the current policy yields
\begin{equation}
 -\mathbb E_{x\sim\mathcal D}
 D_{\mathrm{KL}}\bigl(\pi_\theta^+(\cdot\mid x)
                  \Vert\pi_\theta(\cdot\mid x)\bigr)
 =\mathbb E_{x\sim\mathcal D}[\log\rho_\theta(x)]
 =\mathcal J_{\mathrm{MaxRL}}^{(\infty)}(\theta).
 \label{eq:conditional-kl-maxrl}
\end{equation}
\end{theorem}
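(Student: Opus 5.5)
The plan is to write out the true conditional target explicitly and compute the KL divergence in closed form. Under binary rewards $r(x,y)\in\{0,1\}$, Bayes' rule gives
\begin{equation*}
 \pi_\theta^+(y\mid x)=\frac{\pi_\theta(y\mid x)\,r(x,y)}{\rho_\theta(x)},
 \qquad \rho_\theta(x)=\sum_{y}\pi_\theta(y\mid x)\,r(x,y).
\end{equation*}
This is well defined whenever $\rho_\theta(x)>0$. For prompts with $\rho_\theta(x)=0$, both sides of \eqref{eq:conditional-kl-maxrl} are undefined or equal to $-\infty$, and I will state this convention explicitly. I will also confirm that this $\pi_\theta^+$ is the conditional preserved by Theorem~\ref{lem:reward-reweighting}: the multiplier $\beta/(\lambda_x-r)$ is constant on $\{r=1\}$, so renormalising $q^*$ on that set recovers $\pi_\theta^+$. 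This check only motivates the choice of target and is not needed for the identity itself.

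Next I will compute the KL divergence for a fixed prompt. On the support of $\pi_\theta^+$ we have $r(x,y)=1$, so the ratio $\pi_\theta^+/\pi_\theta$ equals $1/\rho_\theta(x)$ there. Hence
\begin{equation*}
 D_{\mathrm{KL}}\bigl(\pi_\theta^+(\cdot\mid x)\Vert\pi_\theta(\cdot\mid x)\bigr)
 =\mathbb E_{y\sim\pi_\theta^+}\!\left[\log\frac{1}{\rho_\theta(x)}\right]
 =-\log\rho_\theta(x).
\end{equation*}
There are no $\theta$-dependent entropy terms left over: the integrand is constant under $\pi_\theta^+$. Negating and taking $\mathbb E_{x\sim\mathcal D}$ gives the first equality in \eqref{eq:conditional-kl-maxrl}.

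The second equality identifies $\mathbb E_x[\log\rho_\theta(x)]$ with $\mathcal J^{(\infty)}_{\mathrm{MaxRL}}$. Here I will invoke the definition of MaxRL from \citet{tajwar_maximum_2026}. Their truncated objective is the Maclaurin expansion $\log\rho=-\sum_{k\ge1}(1-\rho)^k/k$ cut off at order $T$, and $T=\infty$ is the full log-likelihood of success. Since $0<\rho\le1$, the series converges, and the identity follows.

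The main obstacle is definitional rather than computational: the paper has not yet formally defined $\mathcal J^{(\infty)}_{\mathrm{MaxRL}}$, so I must fix the convention that it denotes $\mathbb E_x\log\rho_\theta(x)$ (the $T\to\infty$ limit) and handle the $\rho_\theta=0$ edge case. As a possible addition, one could note a subtlety about gradients. The KL divergence in the statement has a target that itself depends on $\theta$. The identity holds as a function of $\theta$, so its gradient equals $\nabla\log\rho_\theta=\mathbb E_{\pi_\theta^+}[\nabla\log\pi_\theta]$. This coincides with the stop-gradient forward-KL gradient, because $\nabla\mathbb E_{\pi^+}[\log\pi^+]$ contributes $\mathbb E_{\pi^+}[\nabla\log\pi^+]=0$.
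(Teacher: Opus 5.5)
Your proof is correct and follows the paper's route: on the success set $\pi_\theta^+/\pi_\theta=1/\rho_\theta(x)$, so the KL collapses to $-\log\rho_\theta(x)$, and $\mathcal J^{(\infty)}_{\mathrm{MaxRL}}$ is read as the $K\to\infty$ limit of the truncated series $-\sum_{k\le K}(1-\rho)^k/k$, as in the paper's finite-order paragraph. One slip in your optional gradient remark: $\nabla\mathbb E_{\pi_\theta^+}[\log\pi_\theta^+]$ is the negative-entropy gradient and is not zero in general, so the stated reason is incomplete; the agreement with the stop-gradient gradient holds instead because the sampling-distribution terms combine into $\sum_{y\in\mathcal S_x^+}\nabla\pi_\theta^+(y)\log\frac{\pi_\theta^+(y)}{\pi_\theta(y)}=-\log\rho_\theta(x)\sum_{y\in\mathcal S_x^+}\nabla\pi_\theta^+(y)=0$, or, as in the paper, from $D_{\mathrm{KL}}(\pi_{\theta_0}^+\Vert\pi_\theta)=D_{\mathrm{KL}}(\pi_{\theta_0}^+\Vert\pi_\theta^+)-\log\rho_\theta(x)$.
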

\noindent\hyperref[app:conditional-kl-maxrl]{Proof: Appendix~\ref*{app:conditional-kl-maxrl}.}

This actually gives the limit objective function of MaxRL algorithm, which is previously derived from the maximum likelihood estimator of the predictive accuracy~\citep{tajwar_maximum_2026}. It is believed to have some superior performance on maintaining the diversity of the distribution. The standard RL objective is the first-order approximation of MaxRL. They induce the same gradient direction for an individual sample, differing only in the average weighting assigned across samples. Since our subsequent analysis focuses on individual samples, the two methods are effectively equivalent in the properties relevant to our analysis.

An auxiliary binary acceptance variable extends the conditional-KL identity to bounded nonbinary rewards on an augmented response space; Appendix~\ref{app:randomized-reward-extension} outlines this construction, while our main RL analysis remains focused on binary rewards.

\begin{table}[!htbp]
\centering
\caption[Training algorithms viewed as distribution matching]{Training algorithms viewed as distribution matching. Parameter-independent constants
are omitted. Conditional objectives are averaged over prompts
$x\sim\mathcal D$. \forwardKL{} places the target in the first argument;
\reverseKL{} places it in the second. RL and MaxRL use binary rewards and
$\rho_\theta(x)>0$, with the policy-dependent target $\pi_\theta^+$.
Algorithms are iterative if their target distribution depends on the current
model, and noniterative otherwise; \textcolor{frameworkpolicy}{$\pi_\theta$}
is highlighted in iterative targets.
$^{\dagger}$\,RL is its first-order approximation.}
\label{tab:distribution-matching}
\vspace{5pt}
\begingroup
\small
\setlength{\tabcolsep}{5pt}
\renewcommand{\arraystretch}{1.45}
\begin{tabularx}{\linewidth}{
 >{\raggedright\arraybackslash}p{0.15\linewidth}
 >{\centering\arraybackslash}X
 >{\centering\arraybackslash}p{0.235\linewidth}
 >{\raggedright\arraybackslash}p{0.16\linewidth}}
\specialrule{\heavyrulewidth}{0pt}{0pt}
\rowcolor{frameworktint}
\textcolor{frameworkink}{\textbf{Algorithm}} &
\textcolor{frameworkink}{\textbf{Objective function}} &
\textcolor{frameworkink}{\textbf{Target distribution}} &
\textcolor{frameworkink}{\textbf{Divergence}} \\
\algorithmgroup{Noniterative algorithms}
Pre-training &
$\mathbb E_{x\sim p_{\mathcal D}}[\log\pi_\theta(x)]$ &
$p_{\mathcal D}(x)$ & \forwardKL \\
SFT &
$\mathbb E_{y\sim p_{\mathcal D}(\cdot\mid x)}[\log\pi_\theta(y\mid x)]$ &
$p_{\mathcal D}(y\mid x)$ & \forwardKL \\
\addlinespace[3pt]
SFT-based KD &
$\mathbb E_{y\sim\pi_t(\cdot\mid x)}[\log\pi_\theta(y\mid x)]$ &
$\pi_t(y\mid x)$ & \forwardKL \\
OPD &
$\displaystyle\mathbb E_{y\sim\pi_\theta(\cdot\mid x)}
 \!\left[\log\frac{\pi_t(y\mid x)}{\pi_\theta(y\mid x)}\right]$ &
$\pi_t(y\mid x)$ & \reverseKL \\
\algorithmgroup{Iterative algorithms}
RL &
$\mathbb E_{y\sim\pi_\theta(\cdot\mid x)}[r(x,y)]$ &
$\displaystyle\targetpolicy^+=\frac{r(x,y)\targetpolicy(y\mid x)}{\rho_\theta(x)}$ &
\forwardKL$^{\dagger}$ \\
MaxRL &
$\log\mathbb E_{y\sim\pi_\theta(\cdot\mid x)}[r(x,y)]$ &
$\displaystyle\targetpolicy^+=\frac{r(x,y)\targetpolicy(y\mid x)}{\rho_\theta(x)}$ &
\forwardKL \\
\bottomrule
\end{tabularx}
\endgroup
\end{table}
\FloatBarrier
%


\section{The Effectiveness Border of Distillation Algorithms}

When OPD algorithm was initially introduced, its superior performance was believed to come from its on policy nature~\citep{thinking_machines_lab_-policy_2025}. SFT trains the student on the trajectory generated by the teacher. Even if the student learns well on prediction over teacher's prefix, if it can not generate a prefix with the same quality, the performance will still be limited. OPD method however, directly uses teacher's predictive distribution to correct the student on its own trajectory, and avoids this drawback~\citep{thinking_machines_lab_-policy_2025}. However, this discussion ignored another important impact in OPD: The teacher's probability prediction on the student's prefix may not be sufficiently reliable as well. In some previous works we did observe some cases where OPD may face stability issue and failed to improve the student model~\citep{li_rethinking_2026}. Clearly this issue does not exist in SFT.

It is difficult to determine precisely which of the two issues discussed above has a dominant impact on performance. Beyond such relatively subjective analysis, however, we aim to establish rigorous theoretical results and characterize the regimes in which the two algorithms are effective. Since all distillation methods have the same target distribution, one primary theoretical difference may lie in the estimation of the gradient. This leads to some interesting results.

\subsection{Proximal Behavior}

We study the proximal regime, where the student distribution is close to the teacher. At the \emph{proximal limit}, they coincide, while the teacher is held fixed during differentiation. Both negative-KL population objectives and their gradients then vanish, but their sampled gradient estimators behave differently.

\Needspace{11\baselineskip}
\begin{lemma}[Gradient noise at the proximal limit]
\label{lem:proximal-gradient-noise}
Using $m$ independent responses to a prompt $x$. At
$\pi_{\theta_0}(\cdot\mid x)=\pi_t(\cdot\mid x)$, both gradient estimators are unbiased with mean zero, but
\begin{equation}
 \widehat g_{\mathrm{OPD}}(\theta_0)=0\quad\text{almost surely},
 \qquad
  \operatorname{Cov}\bigl(\widehat g_{\mathrm{SFT}}(\theta_0)\bigr)
 =\frac1m F_x(\theta_0),
 \label{eq:proximal-gradient-noise}
\end{equation}
where $F_x(\theta_0)$ is the student Fisher information at prompt $x$.
Thus SFT typically has nonzero gradient noise at, whereas OPD vanishes for every sampled batch.
\end{lemma}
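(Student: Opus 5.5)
I will first fix the two estimators explicitly, since the statement leaves them implicit. Draw $y_1,\dots,y_m$ i.i.d.\ from the sampling distribution of each method: $\pi_t(\cdot\mid x)$ for SFT-based KD, and $\pi_\theta(\cdot\mid x)$ for OPD. Write $s_\theta(y)=\nabla_\theta\log\pi_\theta(y\mid x)$ for the score. The gradient of \eqref{eq:kl-loss} then gives
\begin{equation*}
 \widehat g_{\mathrm{SFT}}(\theta)=\frac1m\sum_{i=1}^m s_\theta(y_i),\qquad y_i\sim\pi_t(\cdot\mid x).
\end{equation*}
For OPD I differentiate \eqref{eq:opd-loss} with the score-function identity, holding $\pi_t$ fixed. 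Sampled-token OPD uses the log-ratio as a reward, and the extra term $-\mathbb E_{\pi_\theta}[s_\theta]$ vanishes in expectation. So the estimator is
\begin{equation*}
 \widehat g_{\mathrm{OPD}}(\theta)=\frac1m\sum_{i=1}^m \log\frac{\pi_t(y_i\mid x)}{\pi_\theta(y_i\mid x)}\,s_\theta(y_i),\qquad y_i\sim\pi_\theta(\cdot\mid x).
\end{equation*}
If the $-s_\theta$ term is kept, I will note that it is evaluated as a stop-gradient per-token constant, or absorbed as a baseline. Before arguing, I will state explicitly which convention is meant, because the almost-sure claim depends on it.

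\textbf{OPD at the proximal limit.} At $\theta_0$ we have $\pi_{\theta_0}(\cdot\mid x)=\pi_t(\cdot\mid x)$ pointwise. Every sampled $y_i$ lies in the support of $\pi_{\theta_0}$, so its probability is positive and the ratio is exactly one. Hence $\log\bigl(\pi_t(y_i\mid x)/\pi_{\theta_0}(y_i\mid x)\bigr)=0$ for every draw, and $\widehat g_{\mathrm{OPD}}(\theta_0)=0$ for every realization. The only exceptions are null events off the support, which gives almost surely. The same holds token-wise under the autoregressive factorization, since the per-token log-ratios also vanish. Mean zero and unbiasedness are then immediate, because the population gradient is zero: the reverse KL attains its minimum $0$ at $\theta_0$.

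\textbf{SFT at the proximal limit.} Unbiasedness follows from $\mathbb E_{y\sim\pi_t}[s_{\theta_0}(y)]=\mathbb E_{y\sim\pi_{\theta_0}}[s_{\theta_0}(y)]=\sum_y\nabla_\theta\pi_\theta(y\mid x)\big|_{\theta_0}=\nabla_\theta 1=0$. This uses the finiteness of the response space, or a dominated-convergence regularity condition to interchange sum and derivative, which I will state as an assumption. For the covariance, the samples are i.i.d.\ and each has mean zero. Therefore $\operatorname{Cov}(\widehat g_{\mathrm{SFT}})=\frac1m\mathbb E_{\pi_{\theta_0}}[s_{\theta_0}s_{\theta_0}^\top]=\frac1m F_x(\theta_0)$, using again that $\pi_t=\pi_{\theta_0}$ to replace the sampling law. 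The final sentence of the lemma, that SFT typically has nonzero noise, follows because $F_x(\theta_0)\neq0$ unless the score vanishes $\pi_{\theta_0}$-a.s. That happens only in degenerate cases, for example a deterministic policy or $\theta$ locally not affecting $\pi_\theta(\cdot\mid x)$.

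\textbf{Main obstacle.} The only delicate step is the precise definition of the OPD estimator. If one includes the non-stop-gradient term $-\frac1m\sum_i s_\theta(y_i)$ from differentiating $\log\pi_\theta$ inside the expectation, then that term is itself a mean-zero noise term. Its covariance would be $F_x/m$, so the almost-sure vanishing would fail. I will therefore argue that the sampled-token OPD used in practice treats the per-token log-ratio as a detached advantage. Equivalently, the dropped term has zero expectation, so dropping it preserves unbiasedness. I will make this convention explicit so the lemma holds as stated.
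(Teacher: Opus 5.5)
Your proposal is correct and follows essentially the same route as the paper's proof. The paper also defines the OPD estimator as $\frac1m\sum_i w_\theta(\widetilde Y_i)s_\theta(\widetilde Y_i)$, justified by $\mathbb E_{\pi_\theta}[(w_\theta-1)s_\theta]=\mathbb E_{\pi_\theta}[w_\theta s_\theta]$, which is exactly your dropped-term convention; it uses $w_{\theta_0}\equiv0$ and independence to get $\frac1m F_x(\theta_0)$ for KD. The only differences are that the paper shows unbiasedness of both estimators at every $\theta$, not just at $\theta_0$, and adds a continuity argument extending the variance gap to a neighborhood of $\theta_0$, which the lemma as stated does not require.
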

\noindent\hyperref[app:proximal-gradient-noise]{Proof: Appendix~\ref*{app:proximal-gradient-noise}.}

Under the smoothness assumptions, this strict gap still persists in a neighborhood of $\theta_0$: OPD has lower mean-squared error in estimating its population gradient than SFT has. This establishes a local advantage in gradient estimation in a range of proximal case, and could provide a theoretical explanation on the superior performance of OPD.

\subsection{Distal Behavior}

The distal regime can exhibit different behavior. When the student assigns appreciable probability to responses that the teacher considers very unlikely, the log-ratio weight $w_\theta(y\mid x)=\log\pi_t(y\mid x)-\log\pi_\theta(y\mid x)$ can amplify OPD's gradient noise. Under specific conditions, OPD's gradient estimation variance can be proved to exceed that of sampled-response SFT by an arbitrarily large factor, see Appendix~\ref{app:distal-gradient-variance}.




These results are also supported by our experiments, see Appendix~\ref{app:distillation-experiments} for more details.

\section{Variance Accumulation in Iterative Algorithms}\label{sec:variance-accumulation}

Distillation algorithms have a fixed target distribution for the student to fit. Although we have showed in Theorem~\ref{lem:reward-reweighting}, the ideal single-step update of RL or MaxRL with binary reward will not affect the target distribution, this property may not hold in real applications. One important reason is that, there are non-negligible random noise in the one optimization step. This could be caused by the empirical estimator of the objective, the approximation of the optimizer and the numerical precision of the model parameter.

In this section we study \emph{variance accumulation} in iterative training and its relation to the observed entropy dynamics, starting from the gradient noise in self-distillation.

\subsection{Iterative Self-Distillation}

We start from \emph{iterative self-distillation}: each step samples responses from the current model and fits this empirical distribution. This is equivalent to an uncentered RL update with reward $1$ for every response, and coincides with sampled-response SFT at the proximal limit each step. The distinction is that, in SFT at the proximal limit, the target distribution is fixed as $\pi_t = \pi_{\theta_0}$, whereas in iterative self-distillation the target distribution is $\displaystyle\targetpolicy$ which updates with the current policy.


By Lemma~\ref{lem:proximal-gradient-noise}, the sampled gradient has zero mean but generally nonzero variance. Fix a prompt $x$, write $\pi_k=\pi_{\theta_k}(\cdot\mid x)$, and let $\mathcal F_k$ denote the training history before sampling at step $k$. Then we have the following result:

\begin{lemma}[Martingale structure of self-distillation]
\label{lem:self-distillation-martingale}
For iterative self-distillation with step size $\eta_k$, the parameters form a martingale. The induced probabilities satisfy the same property to first order:
\begin{equation}
 \mathbb E[\theta_{k+1}\mid\mathcal F_k]=\theta_k,
 \qquad
 \mathbb E[\pi_{k+1}(y)\mid\mathcal F_k]
 =\pi_k(y)+O(\eta_k^2).
 \label{eq:self-distillation-martingale}
\end{equation}
\end{lemma}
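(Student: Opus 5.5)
The plan is to write the update explicitly, take conditional expectations for the parameters, and then Taylor-expand the probabilities to second order for the second claim. At step $k$ we draw $y_1,\dots,y_m\sim\pi_k$ i.i.d.\ given $\mathcal F_k$ and update by stochastic gradient ascent on the empirical log-likelihood,
\begin{equation*}
 \theta_{k+1}=\theta_k+\eta_k\,\widehat g_k,
 \qquad
 \widehat g_k=\frac1m\sum_{i=1}^m\nabla_\theta\log\pi_{\theta}(y_i\mid x)\big|_{\theta=\theta_k}.
\end{equation*}
This is exactly sampled-response SFT with teacher $\pi_t=\pi_{\theta_k}$ at the proximal limit. We assume $\eta_k$ is $\mathcal F_k$-measurable, i.e.\ a deterministic schedule or one fixed before sampling.

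For the martingale claim, the score-function identity gives $\mathbb E_{y\sim\pi_k}[\nabla\log\pi_{\theta_k}(y)]=\sum_y\nabla\pi_{\theta_k}(y)=\nabla\sum_y\pi_{\theta_k}(y)=0$. Hence $\mathbb E[\widehat g_k\mid\mathcal F_k]=0$, which is the zero-mean part of Lemma~\ref{lem:proximal-gradient-noise}, and so $\mathbb E[\theta_{k+1}\mid\mathcal F_k]=\theta_k$. Integrability of $\theta_k$ follows inductively from the finite second moment of the score, since $\operatorname{Cov}(\widehat g_k\mid\mathcal F_k)=F_x(\theta_k)/m$ by the same lemma; this requires the Fisher information to be finite along the trajectory.

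For the probabilities, fix $y$ and Taylor-expand $\phi(\theta)=\pi_\theta(y)$ around $\theta_k$:
\begin{equation*}
 \pi_{k+1}(y)=\pi_k(y)+\eta_k\nabla\phi(\theta_k)^\top\widehat g_k+\tfrac12\eta_k^2\,\widehat g_k^\top\nabla^2\phi(\tilde\theta)\,\widehat g_k ,
\end{equation*}
where $\tilde\theta$ lies on the segment between $\theta_k$ and $\theta_{k+1}$. Taking $\mathbb E[\cdot\mid\mathcal F_k]$, the linear term vanishes by the previous step.

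The main obstacle is controlling the remainder, because $\tilde\theta$ is random. I would assume the Hessian of $\pi_\theta(y)$ is bounded, $\|\nabla^2\phi\|\le L$; for softmax parametrizations with bounded features this holds globally, since probabilities and their derivatives are bounded. The remainder is then bounded in conditional expectation by
\begin{equation*}
 \tfrac12 L\,\eta_k^2\,\mathbb E\bigl[\|\widehat g_k\|^2\mid\mathcal F_k\bigr]=\tfrac12 L\,\eta_k^2\,\operatorname{tr}F_x(\theta_k)/m ,
\end{equation*}
which is $O(\eta_k^2)$ provided $\operatorname{tr}F_x(\theta_k)$ stays bounded, e.g.\ a bounded score. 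If only local smoothness is available, I would instead use the exact second-order term $\tfrac12\eta_k^2\operatorname{tr}\bigl(\nabla^2\phi(\theta_k)F_x(\theta_k)\bigr)/m$ plus an $O(\eta_k^3)$ third-order term, assuming third moments of the score are bounded. Either way, the result is $\mathbb E[\pi_{k+1}(y)\mid\mathcal F_k]=\pi_k(y)+O(\eta_k^2)$.

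This makes precise that the induced probabilities are only a martingale to first order: the curvature of $\theta\mapsto\pi_\theta$ converts the zero-mean noise into a nonzero drift of order $\eta_k^2$.
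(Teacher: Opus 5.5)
Your proposal is correct and follows essentially the same route as the paper. The score identity gives $\mathbb E[\widehat g_k\mid\mathcal F_k]=0$, and a second-order Taylor expansion of $\pi_\theta(y)$ with a Hessian bound gives a remainder of at most $\tfrac12 M\eta_k^2\operatorname{Tr}F_x(\theta_k)/m$. The paper only assumes the Hessian bound along the update segments, and this is automatic locally: the finite response space makes the increment $\eta_k\widehat g_k$ bounded for fixed $\theta_k$, so you need neither a global bound nor the third-moment fallback.
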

\noindent\hyperref[app:variance-accumulation]{Proof: Appendix~\ref*{app:variance-accumulation}.}
%

\begin{theorem}[Conditional decrease of expected entropy]
\label{lem:self-distillation-entropy}
Let $h_x(\theta)=H(\pi_\theta(\cdot\mid x))$. If $h_x$ is concave along every possible update segment from $\theta_k$ to $\theta_{k+1}$, then
\begin{equation}
 \mathbb E[H(\pi_{k+1})\mid\mathcal F_k]\le H(\pi_k).
 \label{eq:self-distillation-entropy}
\end{equation}
The inequality is strict under nonzero conditional update variance.
\end{theorem}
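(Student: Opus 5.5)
The argument is a Jensen-type step in parameter space, combined with the martingale property of Lemma~\ref{lem:self-distillation-martingale}. The key input is $\mathbb E[\theta_{k+1}\mid\mathcal F_k]=\theta_k$, which holds exactly; we only need the parameter statement, not the first-order statement on probabilities. Write the update as $\theta_{k+1}=\theta_k+\Delta_k$ with $\Delta_k=\eta_k\widehat g_k$. Conditionally on $\mathcal F_k$, the increment $\Delta_k$ has mean zero, since the self-distillation gradient estimator is unbiased with zero mean, as in Lemma~\ref{lem:proximal-gradient-noise} applied with $\pi_t=\pi_{\theta_k}$.

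\textbf{Step 1: a supporting tangent bound.} Fix a realization of $\Delta_k$ and set $\phi(s)=h_x(\theta_k+s\Delta_k)$ for $s\in[0,1]$. By hypothesis $\phi$ is concave on $[0,1]$. Entropy is smooth in $\theta$ wherever $\pi_\theta$ has full support, as for softmax models, so concavity gives the tangent-line bound
\[
 h_x(\theta_{k+1})=\phi(1)\le\phi(0)+\phi'(0)=h_x(\theta_k)+\nabla h_x(\theta_k)^\top\Delta_k .
\]
The gradient $\nabla h_x(\theta_k)$ is $\mathcal F_k$-measurable, and this bound holds for every possible update.

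\textbf{Step 2: conditional expectation.} Take $\mathbb E[\cdot\mid\mathcal F_k]$ of the bound. The linear term gives $\nabla h_x(\theta_k)^\top\mathbb E[\Delta_k\mid\mathcal F_k]=0$, so $\mathbb E[H(\pi_{k+1})\mid\mathcal F_k]\le H(\pi_k)$. This needs integrability of $h_x(\theta_{k+1})$. That is automatic because entropy is bounded by $\log|\mathcal Y|$ on a finite response space, or bounded below by $0$ in general.

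\textbf{Step 3: strictness.} This is the delicate part, because mere concavity only gives $\le$. I would read the hypothesis of nonzero conditional update variance as meaning that, with positive conditional probability, $\Delta_k\neq0$ and $\phi$ is strictly concave somewhere on the segment, for example $\Delta_k^\top\nabla^2h_x\Delta_k<0$ at some point of it. Under that reading, the exact remainder
\[
 \phi(1)-\phi(0)-\phi'(0)=\int_0^1(1-s)\,\phi''(s)\,ds
\]
is $\le0$ everywhere and $<0$ on an event of positive conditional probability. Its conditional expectation is therefore strictly negative, which yields the strict inequality.

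The main obstacle is making the strictness claim precise. The phrase ``nonzero variance'' alone does not exclude directions along which $h_x$ is affine. So I would state strictness under the assumption of strict concavity along the segments: $\mathbb P\bigl(\int_0^1(1-s)\phi''(s)\,ds<0\mid\mathcal F_k\bigr)>0$. In the locally quadratic case this is equivalent to $\mathrm{tr}\bigl(-\nabla^2h_x(\theta_k)\operatorname{Cov}(\Delta_k\mid\mathcal F_k)\bigr)>0$, by a second-order expansion using Lemma~\ref{lem:proximal-gradient-noise}'s covariance $\eta_k^2F_x(\theta_k)/m$.
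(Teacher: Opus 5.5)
Your proposal is correct and follows essentially the same route as the paper: a concavity/Taylor bound on $h_x$ along the update segment, with the linear term removed by $\mathbb E[\theta_{k+1}-\theta_k\mid\mathcal F_k]=0$. The only difference is strictness. The paper assumes a uniform curvature bound $d_k^\top\nabla^2 h_x\,d_k\le-\mu_k\|d_k\|_2^2$ with $\mu_k>0$ and obtains the explicit decrement $\frac{\mu_k\eta_k^2}{2m_k}\operatorname{Tr}F_x(\theta_k)$. Your condition (a strictly negative remainder with positive conditional probability) is weaker but gives no rate, and your remark that nonzero variance alone does not force strictness is exactly why the paper adds $\mu_k>0$.
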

\noindent\hyperref[app:variance-accumulation]{Proof: Appendix~\ref*{app:variance-accumulation}.}

Thus sampling noise can deterministicly reduce expected entropy even when the parameter update has zero mean, while iterative generation severely amplifies this effect. We should note that the iterative training is unavoidable in RL or self-distillation theoretically, since the distribution drift already occurs, the objective itself will require updating the actor's parameter or introducing importance sampling correction~\citep{schulman_proximal_2017}.

This expectational decrease of entropy should be understood in this way: We have no guarantee on the deterministic direction of entropy in each step. If one training step happens to generate some samples with low probability, self-distillation on these samples will actually increase the sample entropy. So this is not contradictive with previous researches that carefully analyzed the condition when a single learning step will increase or decrease the entropy~\citep{cui_entropy_2025, ren_learning_2025, jin_revisiting_2026, wang_entropy_2026}. This result in fact takes a step further, and answers if we are doing the update multiple times, or in the long-term of training, how should the entropy change on average. The long-term expectation cancels the randomness in a single step update, and gives a deterministic decrement.

\subsection{Reversed Iterative Self-Distillation}

A common explanation for entropy collapse in large-model training is that high-probability samples are more likely to be sampled, and further increasing their probabilities leads to an additional reduction in entropy~\citep{cui_entropy_2025, hao_rethinking_2026}. This interpretation is consistent with our findings. In fact, the expected entropy decrease established by Theorem~\ref{eq:self-distillation-entropy} further shows that, although low-probability samples may occasionally be sampled and increase the entropy during training, such increases are, in expectation, dominated by the overall decreasing trend.

To further illustrate this phenomenon, we additionally conduct a reversed iterative self-distillation experiment. In this setting, we assign a reward of $-1$ to all samples. As a result, high-probability samples are also more likely to be penalized, thereby tending to increase the entropy. Nevertheless, after reversing the sign of the objective, it still retains the properties of zero mean and nonzero variance, and the expected entropy decrease predicted by Theorem~\ref{eq:self-distillation-entropy} continues to hold. This is a relatively counterintuitive result. It further predicts that the rare entropy-decreasing events under such training should be particularly pronounced, which is indeed what we observe empirically, as shown in Figure~\ref{fig:distillation-comp}. More results can be found in Section~\ref{app:supp-exp-isd}.

\begin{figure}[!htb]
  \centering
  \includegraphics[width=\linewidth]{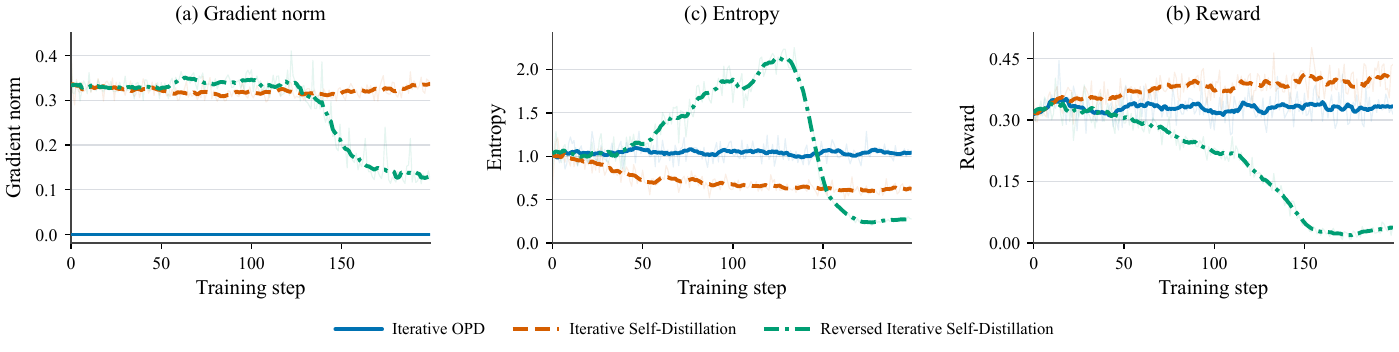}
  \caption{The comparision of the entropy dynamics of distillation algorithms. The nonzero variance and iteration are two necessasy conditions for variance accumulation to occur. RL is where both conditions are satisfied.}
  \label{fig:distillation-comp}
\end{figure}

%
%
%


\subsection{Variance Accumulation in RL}

While the entropy collapse is widely observed in RL scenarios, previous experiments could provide certain evidence that the variance accumulation is also one of the main causes of this phenomena in RL. Another important side evidence of the variance impact in RL is the advantage estimation methods adopted in PPO~\citep{schulman_proximal_2017}, GRPO~\citep{shao_deepseekmath_2024}, or RLOO~\citep{ahmadian_back_2024}. These methods can effectively lower down the variance of the estimator in RL algorithms. This notably increases the stability of RL training, while our deduction further indicates that they are can also prevent potential entropy collapse of the target distribution. It remains open if we could find better methods in controling the estimattion variance.

\begin{figure}[!htb]
  \centering
  \includegraphics[width=\linewidth]{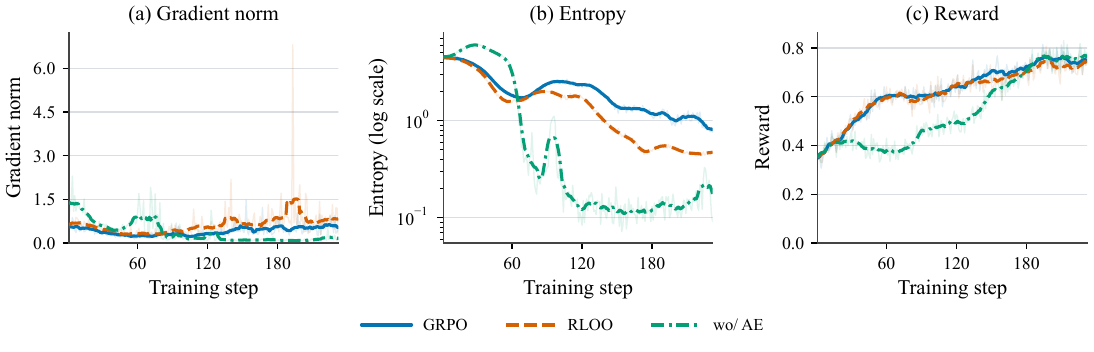}
  \caption{Ablation study on the impact of advantage estimation methods. We can see that under the same RL setting, advantage estimation methods constantly lowers the entropy deduction. Although they eventually achieved similar reward on the training set, to some extent, lower entropy still reflects reduced diversity and weaker generalization ability.}
  \label{fig:ablation-ae-rl}
\end{figure}
%

\section{Conditional KL Regularization in RL}

\begin{insight}
  In iterative algorithms, the target distribution helps separate intended gains from unintended side effects. For example, RL aims to maximize expected reward. With binary rewards, conditioning on $r(x,y)=1$ yields the training target $\pi_\theta^+$, which remains unchanged under the ideal update in Theorem~\ref{lem:reward-reweighting}. Tracking how this target changes during training can therefore reveal side effects beyond the intended reward improvement.
\end{insight}

\subsection{Conditional Decomposition in RL}

A sequence-level binary reward naturally decomposes the probability distribution induced by the original model into two components, with the true conditional distribution serving as the target distribution. The policy entropy also decomposes into the binary entropy of correctness and an accuracy-weighted average of the true and false conditional entropies.

\begin{lemma}[Accuracy-weighted entropy decomposition]
\label{lem:rl-target-entropy}
For a prompt $x$ with deterministic binary rewards, write $\rho=\rho_\theta(x)\in(0,1)$ for the accuracy and suppress conditioning on $x$. Let $H(\pi_\theta^+)$ and $H(\pi_\theta^-)$ denote the true conditional and false conditional entropies, respectively. Then
\begin{equation}
 H(\pi_\theta)=h_{\mathrm b}(\rho)
    +\rho H(\pi_\theta^+)+(1-\rho)H(\pi_\theta^-),
 \label{eq:rl-entropy-decomposition}
\end{equation}
where $h_{\mathrm b}(\rho)=-\rho\log\rho-(1-\rho)\log(1-\rho)$.
If $H(\pi_\theta^+)<H(\pi_\theta^-)$, increasing accuracy while keeping both conditional distributions fixed will decrease their weighted average entropy.
\end{lemma}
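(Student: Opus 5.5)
The plan is to prove the identity as the chain rule for entropy, applied to the pair $(Y,R)$ with $R=r(x,Y)$. Because the reward is a deterministic binary function of $y$, the responses split into two disjoint sets $\mathcal Y^+=\{y:r(x,y)=1\}$ and $\mathcal Y^-=\{y:r(x,y)=0\}$. On these sets the conditionals are
\begin{equation*}
\pi_\theta^+(y)=\pi_\theta(y)/\rho \text{ on } \mathcal Y^+, \qquad \pi_\theta^-(y)=\pi_\theta(y)/(1-\rho) \text{ on } \mathcal Y^-,
\end{equation*}
and both are well defined since $\rho\in(0,1)$.

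The first step is to split the sum defining $H(\pi_\theta)=-\sum_y\pi_\theta(y)\log\pi_\theta(y)$ over $\mathcal Y^+$ and $\mathcal Y^-$. On $\mathcal Y^+$, substitute $\pi_\theta(y)=\rho\,\pi_\theta^+(y)$ and expand $\log(\rho\,\pi_\theta^+(y))=\log\rho+\log\pi_\theta^+(y)$. Using $\sum_{\mathcal Y^+}\pi_\theta^+=1$, this part equals $-\rho\log\rho+\rho H(\pi_\theta^+)$. The same substitution on $\mathcal Y^-$ gives $-(1-\rho)\log(1-\rho)+(1-\rho)H(\pi_\theta^-)$. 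Adding the two parts gives \eqref{eq:rl-entropy-decomposition}. Equivalently, $H(Y)=H(Y,R)=H(R)+H(Y\mid R)$, where the first equality holds because $R$ is a function of $Y$.

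The only technical point is infinite response spaces. If the entropies are infinite, the rearrangement of the sum needs care. I would handle this by assuming finite entropies; since all terms $-p\log p$ with $p\le 1$ are nonnegative, the split series can then be rearranged freely.

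For the final claim, hold $\pi_\theta^\pm$ fixed and view the weighted average $A(\rho)=\rho H(\pi_\theta^+)+(1-\rho)H(\pi_\theta^-)$ as a function of $\rho$. It is affine, with derivative $H(\pi_\theta^+)-H(\pi_\theta^-)<0$, so it strictly decreases as $\rho$ increases. It is worth noting in the write-up that the binary term $h_{\mathrm b}(\rho)$ is not part of this claim, which concerns only the weighted average. I expect no real obstacle; the measure-theoretic caveat above is the only delicate point.
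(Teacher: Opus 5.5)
Your proposal is correct and matches the paper's proof: the paper also splits $H(\pi_\theta)$ over the disjoint supports of $\pi_\theta^+$ and $\pi_\theta^-$ using $\pi_\theta=\rho\pi_\theta^++(1-\rho)\pi_\theta^-$, expands $\log(\rho\pi_\theta^+)$ and $\log((1-\rho)\pi_\theta^-)$, and then differentiates the affine weighted average in $\rho$ to obtain $H(\pi_\theta^+)-H(\pi_\theta^-)<0$. The paper simply assumes a finite response space. Your infinite-space caveat is therefore extra, and the finite-entropy assumption is not even needed for the identity: after the split every term is nonnegative, so the equality holds in $[0,\infty]$.
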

\noindent\hyperref[app:rl-target-entropy]{Proof: Appendix~\ref*{app:rl-target-entropy}.}

Typically in practice, both conditional entropies are much larger than the binary entropy $h_{\mathrm b}(\rho)\le\log 2$, the total entropy is well approximated by their accuracy-weighted average. This conditional is supported in our experiments, see Figure~\ref{fig:kl-ckl-comparison}. Both conditional entropies are significantly larger than the binary entropy, while false conditional entropy is usually several times higher than true conditional entropy.

This property can, to some extent, account for the approximately linear relationship between entropy and accuracy observed in prior studies~\citep{cui_entropy_2025}. However, a mere shift in probability density is insufficient to explain the reductions in diversity and pass@k performance reported in other studies~\citep{yue_does_2025, tajwar_maximum_2026}. This accuracy-driven source of entropy reduction in RL need not reflect a loss of diversity among correct responses. The concern with diversity collapse is the degeneration of the target and conditional entropy.


This conditional decomposition induced by RL also offers a new perspective on regularization. To prevent model collapse and preserve generalization, a common practice is to introduce a KL-divergence constraint that keeps the trained model from deviating excessively from the original model~\citep{schulman_trust_2017, schulman_proximal_2017, ouyang_training_2022}. However, directly constraining the KL divergence can also limit improvements in reward. Under the conditional decomposition of RL, what we truly need to control is the degradation of the target distribution. This observation motivates a Conditional KL (CKL) regularization scheme that directly constrains changes in the target distribution.

This CKL regularization admits a simple policy-gradient estimator:

\begin{insight}
The mean log probability-ratio between the current and reference models over positive samples captures the average probability gain on correct solutions. CKL then regularizes only the deviation of each positive sample from this average gain. This actually introduces a baseline into the KL penalty for positive samples, while imposing no constraint on negative samples.
\end{insight}

\begin{theorem}[Centered sequence-$k_1$ estimator for CKL]
\label{thm:ckl-estimator}
Fix a prompt $x$ and suppress its conditioning. Let $\pi_\theta$ be a differentiable positive policy and $\pi_{\mathrm{ref}}$ a fixed reference. Define
$K(\theta)=D_{\mathrm{KL}}(\pi_\theta^+\Vert\pi_{\mathrm{ref}}^+)$.
Draw $G$ independent responses $Y_i\sim\pi_{\theta_0}$, and let
$I^+=\{i:r(x,Y_i)=1\}$ and $N=|I^+|$. The empirical signal is
\begin{align}
  d_i=\log\frac{\pi_{\theta_0}(Y_i)}{\pi_{\mathrm{ref}}(Y_i)}
  ,\notag\qquad
  \widehat c_i =
 \begin{cases}
 d_i-\dfrac{1}{N}\sum_{j\in I^+}d_j,&i\in I^+,\ N\ge2,\\
 0,&\text{otherwise}.
 \end{cases}
 \label{eq:ckl-empirical-signal}
\end{align}
Writing $s_i=\left.\nabla_\theta\log\pi_\theta(Y_i)\right|_{\theta_0}$,
the estimator satisfies, for every $n$ with $\Pr(N=n)>0$,
\begin{equation}
 \mathbb E\!\left[\left.\frac1G\sum_{i=1}^G\widehat c_i s_i\,\right|N=n\right]
 =\frac{\max(n-1,0)}{G}\left.\nabla_\theta K(\theta)\right|_{\theta_0}.
 \label{eq:ckl-finite-gradient}
\end{equation}
Thus a detached reward $-\beta\widehat c_i$, with $\beta\ge0$, gives an expected policy-gradient contribution proportional to the negative conditional KL gradient.
\end{theorem}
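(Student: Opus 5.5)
I will first compute the population gradient $\nabla_\theta K$ in a form that matches the estimator. Since $\pi_\theta^+(y)=r(y)\pi_\theta(y)/\rho_\theta$ on the positive set, we have
\[
\log\frac{\pi_\theta^+(y)}{\pi_{\mathrm{ref}}^+(y)}=d_\theta(y)-\log\rho_\theta+\log\rho_{\mathrm{ref}},
\qquad d_\theta=\log(\pi_\theta/\pi_{\mathrm{ref}}).
\]
Also $\nabla\log\pi_\theta^+(y)=s(y)-\mathbb E_{\pi_\theta^+}[s]$. The standard identity for the gradient of a reverse-type KL, $\nabla K=\mathbb E_{\pi_\theta^+}[\log(\pi_\theta^+/\pi_{\mathrm{ref}}^+)\,\nabla\log\pi_\theta^+]$, applies because the score has zero mean under $\pi_\theta^+$. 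The $\rho$ constants then drop out. Centering either factor suffices, so at $\theta_0$
\[
\nabla K=\mathbb E_{\pi^+}\bigl[(d-\bar d^+)\,s\bigr]=\operatorname{Cov}_{\pi^+}(d,s),
\qquad \bar d^+=\mathbb E_{\pi^+}[d].
\]
This reduces the claim to showing that the leave-nothing-out empirical covariance over positives is unbiased up to the factor $(n-1)$.

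Next I will condition on $N=n$. Given $N=n$ and the index set $I^+$, the positive responses $\{Y_i\}_{i\in I^+}$ are i.i.d.\ draws from $\pi_{\theta_0}^+$. This holds because the $Y_i$ are independent and the event factorizes into which indices are positive. The negatives contribute $\widehat c_i=0$, and for $n\le1$ every $\widehat c_i=0$, giving the $\max(n-1,0)$ case trivially.

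For $n\ge2$, write $\sum_{i\in I^+}(d_i-\bar d_N)s_i$, where $\bar d_N$ is the sample mean, and expand it as
\[
\sum_i d_is_i-\frac1n\sum_{i,j}d_js_i .
\]
Taking expectations with i.i.d.\ positives, the diagonal terms give $n(1-\tfrac1n)\mathbb E[ds]$. The off-diagonal $i\ne j$ terms give $-\tfrac{n(n-1)}{n}\mathbb E[d]\,\mathbb E[s]$. The total is $(n-1)\operatorname{Cov}_{\pi^+}(d,s)=(n-1)\nabla K|_{\theta_0}$, exactly the classic unbiasedness of the sample covariance. Dividing by $G$ yields the stated identity. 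The final sentence then follows by linearity: using the detached reward $-\beta\widehat c_i$ in a REINFORCE update contributes $-\beta\frac{\max(n-1,0)}{G}\nabla K$ in conditional expectation, and averaging over $N$ preserves proportionality with a nonnegative constant.

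The main obstacle is the first step. One must be careful that differentiating $K$ through the normalizer $\rho_\theta$ and through the support restriction is legitimate. One must also confirm that $\mathbb E_{\pi^+}[s]$ need not vanish (it equals $\nabla\log\rho$), so that centering of $d$ is genuinely needed, rather than centering of $s$, to absorb that term. I will verify the identity $\mathbb E_{\pi^+}[(d-\bar d^+)s]=\mathbb E_{\pi^+}[\log(\pi^+/\pi^+_{\mathrm{ref}})(s-\mathbb E_{\pi^+}s)]$ explicitly. This uses positivity of $\pi_\theta$ (so $\rho\in(0,1]$, and differentiability is ensured on a finite or dominated response space). The remaining steps are routine.
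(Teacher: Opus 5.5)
Your proposal is correct and follows essentially the same route as the paper. You write $\nabla K|_{\theta_0}$ as $\mathbb E_{\pi^+}[(d-\mu)s]$ by splitting off the $\log\rho$ constants and using that the conditional score $s-\nabla\log\rho_\theta$ has zero mean under $\pi^+$. You then obtain the factor $n-1$ from the standard sample-covariance computation over the i.i.d.\ successes given $N=n$. The only cosmetic difference is that the paper computes the per-success expectation $\frac{n-1}{n}(a-\mu b)$ and sums over successes, whereas you expand the double sum directly.
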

\noindent\hyperref[app:ckl-estimator]{Proof: Appendix~\ref*{app:ckl-estimator}.}

\begin{figure}[htb]
  \centering
  \includegraphics[width=\linewidth]{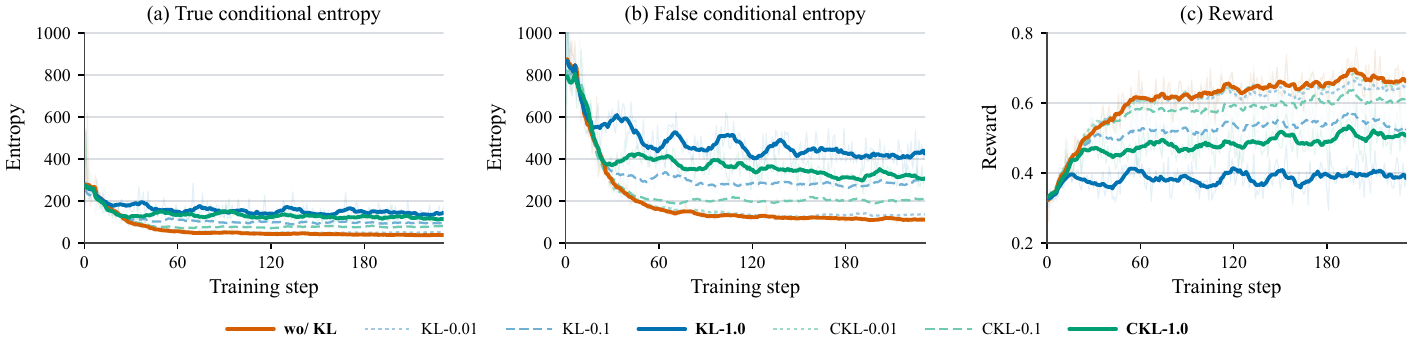}
  \caption{The comparison of KL and Conditional KL (CKL) regularization in RL. We trained Qwen3-1.7B-Base~\citep{yang_qwen3_2025} on MATH~\citep{hendrycks_measuring_2021} dataset with GRPO~\citep{shao_deepseekmath_2024} algorithm and different KL penalities. Two conditional entropies are measured on the sequence level. We can see that strong KL control strength severely limits the reward increment in RL, while CKL regularization allows the model to obtain higher reward while maintaining the diversity of true samples.}
  \label{fig:kl-ckl-comparison}
\end{figure}

\begin{figure}[!htb]
  \centering
  \includegraphics[width=\linewidth]{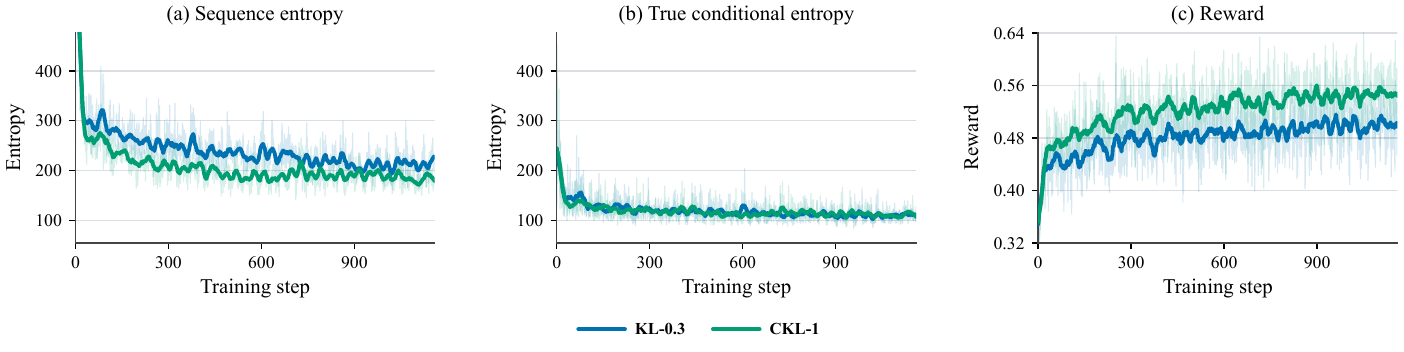}
  \caption{Extended experiments on KL and CKL regularization. These experiments follows the same setting as that in Figure~\ref{fig:kl-ckl-comparison}. We can see that CKL-1 and KL-0.3 exhibits similar regularization ability on conditional distribution, while CKL-1 constantly obtains higher accuracy. Its entropy will thus be slightly lower according to the accuracy-weighted decomposition.}
  \label{fig:ext-ckl-cmp}
\end{figure}

As shown in Figure~\ref{fig:kl-ckl-comparison}, we indeed observe that CKL effectively relaxes the constraint imposed by standard KL regularization on accuracy improvements while preserving the true conditional entropy. Although CKL cannot achieve completely lossless regularization due to cross-sequence interactions induced by model parameterization, it nevertheless demonstrates substantial practical value.


This advantage of CKL still holds for up to 20 training epochs, as shown in Figure~\ref{fig:ext-ckl-cmp}. We then evaluated these models on different math datasets, where CKL regularization did exhibit higher performance and generalizability over KL of similar strength, See Figure~\ref{fig:passk-eval}. More related results could be found in Section~\ref{app:rl-exps}.

\begin{figure}[!htb]
  \centering
  \includegraphics[width=\linewidth]{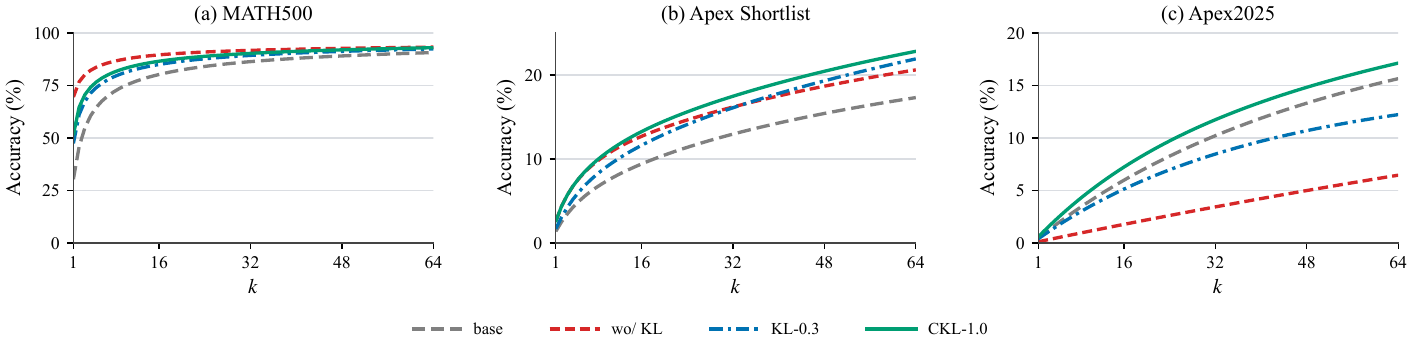}
  \caption{The comparison of pass@k performance on MATH500~\citep{lightman_lets_2023}, Apex Shortlist~\citep{balunovic_matharena_2025} and Apex2025~\citep{balunovic_matharena_2025}. Models are trained for 20 epochs on MATH dataset. We can see that models trained without KL has higher accuracy on datasets similar to the training set, while severe collapse occurs on harder datasets like Apex2025. CKL constantly achieves higher generalizability over KL regularization across these experiments.}
  \label{fig:passk-eval}
\end{figure}

\section{Conclusions}

Distribution matching offers a new perspective on LLM training algorithms. The objective of a variety of LLM training algorithms can be unified as a divergence between the policy model and a target distribution. The target distribution itself also serves as an effective lens through which the properties of training algorithms can be analyzed.

In this work, we show that forward KL distribution matching exhibits non-vanishing variance in the proximal limit, which provides a possible explanation for the advantage of OPD over SFT in proximal settings. Furthermore, for iterative algorithms such as RL, we find that the cumulative effect of this non-vanishing variance can lead to entropy collapse of the target distribution. This observation offers a potential explanation for the degradation of output diversity observed under prolonged RL training in related studies. Motivated by the role of the target distribution in RL, we further develop a Conditional KL regularization method and empirically demonstrate its advantages over classical KL regularization in language model. We hope that our findings can provide useful insights for future research and practical applications in this area.

%
%

\subsection*{AI use statement}

In this work, we used generative AI tools for figure illustration and polishment of the writing. We also used them to retrieve and summarize related works and modifying the training code. AI tools also helped us in early theoretical exploration. The ideation and evaluation were done by the authors. In paper writing we used AI tool to summarize and refine our drafts in theoretical exploration. We did not use generative AI tools for directly draft sections of the paper or generating synthetic datasets.

\bibliography{ce}
\bibliographystyle{iclr2027_conference}

\clearpage
\appendix
\numberwithin{equation}{section}
\section{Related Works}

Modern LLM training typically begins with pretraining on massive datasets, with the goal of modeling the natural language distribution and acquiring broad world knowledge. Mid-training generally follows the same underlying learning paradigm, while differences in training data and parameter configurations can lead to substantially different outcomes. In the post-training stage, a common practice is to train specialized expert models for tasks in different domains, typically using SFT for cold start followed by task-specific RL algorithms. These expert models can then be distilled into a unified model through methods such as OPD, resulting in a final general-purpose model~\citep{deepseek-ai_deepseek-v4_2026, team_kimi_2026, glm-5-team_glm-5_2026}.

\subsection{Distribution Matching Perspective}

Among the algorithms used across different stages of LLM training, most admit a relatively clear interpretation from the perspective of distribution matching, with RL being a notable exception. Despite its distinctive optimization formulation, RL has also been widely recognized as closely related to distribution matching~\citep{korbak_reinforcement_2022}. Prior to our work, a number of studies investigated the connection between RL and distribution-matching objectives and derived locally optimal solutions with an exponential-tilting form similar to that in our Theorem~\ref{lem:reward-reweighting}~\citep{korbak_rl_2022, korbak_reinforcement_2022}. Reward-induced conditional distributions have likewise been studied in prior work and have motivated several algorithmic improvements~\citep{tajwar_maximum_2026, zhou_variational_2025}.

In most previous studies, however, such distributions primarily appeared as auxiliary results in the derivation or analysis of RL objectives. Subtle discrepancies remain between RL optimization and direct distribution matching, preventing the two perspectives from being fully unified~\citep{korbak_reinforcement_2022}. Recent studies on MaxRL provide an important step toward bridging this gap, revealing a first-order approximation between RL and distribution matching. More importantly, our work differs in that we treat the target distribution induced by distribution matching as a central object of study, rather than merely an intermediate analytical construct. In particular, we study both its role in estimating the training objective and its stochastic evolution throughout training, which together constitute the main results of this work. We expect that this perspective may provide a useful lens for further understanding and improving related training algorithms.

\subsection{Entropy Collapse in RL}

Diversity collapse and entropy decrement in reinforcement learning have long been recognized in the classical RL literature. RL optimization can drive a model toward a suboptimal solution with reduced behavioral diversity, thereby limiting the model's long-term training potential. Classical approaches to mitigating this issue include entropy regularization~\citep{ahmed_understanding_2019} and constraints based on KL divergence~\citep{schulman_trust_2017}. These methods, however, introduce their own trade-offs, including optimization instability and restrictions on reward improvement.

As RL has become increasingly prevalent in LLM post-training, a growing body of work has revisited entropy collapse in this setting. Existing studies have investigated its causes and potential remedies from a variety of perspectives, including biases introduced by PPO clipping~\citep{yu_dapo_2025, cai_beyond_2026, jin_revisiting_2026}, geometric biases induced by model architecture~\citep{ren_learning_2025}, and the underlying training dynamics~\citep{wang_entropy_2026}. In contrast, our analysis identifies a long-horizon degradation of the target distribution caused by variance accumulation. This phenomenon arises at a more fundamental level and does not depend on a particular RL algorithm or implementation trick. Based on this analysis, we further derive a effective regularization strategy. Entropy collapse may represent one of the central challenges in scaling post-training, and our results provide an additional perspective on its underlying mechanism.

\subsection{Variance Control and Training Stability}

The stability of RL algorithms has been studied extensively, with the high variance of training-objective estimators being one of the primary challenges. A substantial fraction of algorithmic developments in RL can be understood as attempts to address this issue~\citep{konda_onactor-critic_2003, schulman_trust_2017, schulman_proximal_2017}. Advantage estimation is among the most widely adopted approaches: it can be implemented using a separate critic model~\citep{konda_onactor-critic_2003, schulman_trust_2017, schulman_proximal_2017}, or approximated through techniques such as group-based sampling and normalization~\citep{shao_deepseekmath_2024, ahmadian_back_2024}. Beyond advantage estimation, the treatment of importance weights and the design of the underlying training infrastructure can also have a substantial impact on optimization stability~\citep{zheng_group_2025, gao_soft_2025}.

Recent empirical evidence suggests that, in the LLM setting, the strong prior induced by extensive pretraining can enable stable RL training over a meaningful range even in the absence of aggressive variance-reduction techniques~\citep{ahmadian_back_2024}. Nevertheless, controlling training variance can still yield measurable performance gains. Our analysis further provides a possible explanation for these gains: reducing training variance not only stabilizes individual optimization steps, but can also mitigate the degradation of the training target caused by variance accumulation over time. As a result, the model is more likely to preserve meaningful exploration and reach better regions of the parameter space.

There are also many other works studying the stability issue and improvements of OPD~\citep{li_rethinking_2026, oh_kl_2026}, while a most simple yet effective method could be a two-stage distillation, which uses SFT as coldstart, and use OPD to further improve their performance~\citep{yang_qwen3_2025}. Our analysis could provide more theoretical insights on the distillation algorithms as well.

\section{Limitation}

In the previous sections we proposed a framework that unifies different training algorithms via distribution matching objective. For RL algorithms we only analyzed the simplest 0-1 binary reward setting. For more generalized multi-value reward RL we can also extend the distribution matching by introducing a randomized acceptance event, see Appendix~\ref{app:conditional-kl-maxrl}. However, a detailed investigation of its properties is left for future work.

A more important limitation of this work is that, we did not consider generalization ability of these methods. The main theoretical results derived in this paper apply directly to the behavior on the training set, while a more complete understanding of the corresponding behavior on the test set requires further theoretical analysis. Nevertheless, many of our experiments are also conducted on the test set, where we empirically observe behavior similar to that on the training set.


\section{Proof of Reward Reweighting Under a Forward-KL Restriction}
\label{app:reward-reweighting}

\begin{insight}
The forward-KL penalty makes the optimum interior and unique, so a single
Lagrange multiplier determines how each response probability is reweighted.
Responses with the same reward receive the same multiplier, which cancels
when conditioning on a reward level set.
\end{insight}

Fix a prompt $x$ and a finite response space. Write
$p_y=\pi_\theta(y\mid x)>0$, $r_y=r(x,y)\in\mathbb R$, and $\beta>0$.

\begin{proof}[Proof of Theorem~\ref{lem:reward-reweighting}]
On the probability simplex,
\begin{equation}
 F(q)=\sum_yq_yr_y-\beta D_{\mathrm{KL}}(p\Vert q)
 \label{eq:app-reward-forward-objective}
\end{equation}
has Hessian $-\beta\operatorname{diag}(p_y/q_y^2)\prec0$ and tends
to $-\infty$ at the boundary. Its unique maximizer is interior, with
Lagrange conditions
\begin{equation}
 r_y+\frac{\beta p_y}{q_y}-\lambda_x=0
 \quad\Longrightarrow\quad
 q_y^*=\frac{\beta p_y}{\lambda_x-r_y},\qquad
 \sum_y\frac{\beta p_y}{\lambda_x-r_y}=1.
 \label{eq:app-reward-forward-stationarity}
\end{equation}
The normalization sum decreases continuously from $+\infty$ to $0$
on $(\max_y r_y,\infty)$, determining a unique $\lambda_x$.
For any nonempty $\mathcal S_a=\{y:r_y=a\}$,
\begin{equation}
 q^*(y\mid\mathcal S_a)
 =\frac{p_y\beta/(\lambda_x-a)}
        {\sum_{z\in\mathcal S_a}p_z\beta/(\lambda_x-a)}
 =p(y\mid\mathcal S_a),\qquad y\in\mathcal S_a.
 \label{eq:app-reward-conditional-invariance}
\end{equation}
\end{proof}

\paragraph{Binary rewards.}
For $r_y\in\{0,1\}$ and $0<\rho_\theta(x)<1$, the same solution is
\begin{equation}
 q_y^*=\frac{p_y e^{\alpha_xr_y}}
 {1-\rho_\theta(x)+\rho_\theta(x)e^{\alpha_x}},
 \qquad \alpha_x=\log\frac{\lambda_x}{\lambda_x-1}>0.
 \label{eq:app-reward-binary-tilt}
\end{equation}
Thus $q^*(\mathcal S_1)>\rho_\theta(x)$ and $(q^*)^+=\pi_\theta^+$.
For constant rewards, $q^*=p$.

\paragraph{KL constraint.}
Set $\delta_\beta=D_{\mathrm{KL}}(p\Vert q^*)$.
For any $q$ with $D_{\mathrm{KL}}(p\Vert q)\le\delta_\beta$, optimality gives
\begin{equation}
 \mathbb E_q[r]\le\mathbb E_{q^*}[r]
 +\beta\bigl(D_{\mathrm{KL}}(p\Vert q)-\delta_\beta\bigr)
 \le\mathbb E_{q^*}[r],
 \label{eq:app-reward-constrained}
\end{equation}
with equality only at $q=q^*$.

\section{Proof of the True Conditional KL Identity and Its MaxRL Connection}
\label{app:conditional-kl-maxrl}

\begin{insight}
Conditioning on success rescales every successful response by the same
factor, so the KL reduces to the negative logarithm of the success probability.
Differentiating this probability and using the score identity then gives
the success-weighted MaxRL gradient.
\end{insight}

Let $\mathcal D$ be a fixed empirical prompt distribution. For each $x$,
assume a finite response space, a differentiable positive policy
$\pi_\theta$, and a fixed binary reward. Write
$\mathcal S_x^+=\{y:r(x,y)=1\}$ and assume $\rho_\theta(x)>0$.

\begin{proof}[Proof of Theorem~\ref{lem:conditional-kl-maxrl}]
Since $\pi_\theta^+(y\mid x)=\pi_\theta(y\mid x)/\rho_\theta(x)$ on
$\mathcal S_x^+$ and vanishes elsewhere,
\begin{equation}
 D_{\mathrm{KL}}(\pi_\theta^+\Vert\pi_\theta)
 =\sum_{y\in\mathcal S_x^+}\pi_\theta^+(y\mid x)
       \log\frac1{\rho_\theta(x)}
 =-\log\rho_\theta(x).
 \label{eq:app-success-kl}
\end{equation}
Taking the negative expectation over $x$ proves
Eq.~\eqref{eq:conditional-kl-maxrl}. Moreover,
\begin{equation}
 \nabla_\theta\log\rho_\theta(x)
 =\frac{\sum_y r(x,y)\nabla_\theta\pi_\theta(y\mid x)}{\rho_\theta(x)}
 =\mathbb E_{y\sim\pi_\theta^+(\cdot\mid x)}
       [\nabla_\theta\log\pi_\theta(y\mid x)],
 \label{eq:app-success-score}
\end{equation}
and hence
\begin{equation}
 \nabla_\theta\mathcal J_{\mathrm{MaxRL}}^{(\infty)}
 =\mathbb E_{x\sim\mathcal D,y\sim\pi_\theta(\cdot\mid x)}
 \left[\frac{r(x,y)}{\rho_\theta(x)}
              \nabla_\theta\log\pi_\theta(y\mid x)\right].
 \label{eq:maxrl-gradient}
\end{equation}
\end{proof}

\paragraph{Frozen target.}
For $q_x=\pi_{\theta_0}^+(\cdot\mid x)$,
\begin{equation}
 D_{\mathrm{KL}}(q_x\Vert\pi_\theta)
 =D_{\mathrm{KL}}(q_x\Vert\pi_\theta^+)-\log\rho_\theta(x).
 \label{eq:app-success-frozen-decomposition}
\end{equation}
The first term has zero gradient at $\theta_0$, so
\begin{equation}
 \left.-\nabla_\theta\mathbb E_{x\sim\mathcal D}
 D_{\mathrm{KL}}(q_x\Vert\pi_\theta)\right|_{\theta_0}
 =\left.\nabla_\theta\mathcal J_{\mathrm{MaxRL}}^{(\infty)}
 \right|_{\theta_0}.
 \label{eq:app-success-frozen-gradient}
\end{equation}

\paragraph{Finite-order MaxRL.}
Using $\log\rho=-\sum_{k\ge1}(1-\rho)^k/k$ for $0<\rho\le1$, the
order-$K$ objective of \citet{tajwar_maximum_2026} satisfies
\begin{align}
 \mathcal J_{\mathrm{MaxRL}}^{(K)}
 &=-\mathbb E_{x\sim\mathcal D}
       \left[\sum_{k=1}^{K}\frac{(1-\rho_\theta(x))^k}{k}\right],
 \label{eq:app-maxrl-truncation}\\
 \nabla_\theta\mathcal J_{\mathrm{MaxRL}}^{(K)}
 &=\mathbb E_{x\sim\mathcal D}
       \left[\frac{1-(1-\rho_\theta(x))^K}{\rho_\theta(x)}
                         \nabla_\theta\rho_\theta(x)\right].
 \label{eq:app-maxrl-truncated-gradient}
\end{align}
The geometric sum gives the gradient, and $1/k\le1/(K+1)$ for $k>K$ gives
\begin{equation}
 0\le\mathcal J_{\mathrm{MaxRL}}^{(K)}
             -\mathcal J_{\mathrm{MaxRL}}^{(\infty)}
 \le\mathbb E_{x\sim\mathcal D}
       \left[\frac{(1-\rho_\theta(x))^{K+1}}
                    {(K+1)\rho_\theta(x)}\right].
 \label{eq:app-maxrl-truncation-error}
\end{equation}
Thus $\mathcal J_{\mathrm{MaxRL}}^{(1)}=\mathcal J_{\mathrm{RL}}-1$.
Finite prompt support and $\rho_\theta(x)>0$ imply convergence of the
objective and gradient to Eqs.~\eqref{eq:conditional-kl-maxrl}
and~\eqref{eq:maxrl-gradient} as $K\to\infty$.

\subsection{Extension to Nonbinary Rewards via Randomized Acceptance}
\label{app:randomized-reward-extension}

\begin{insight}
Rescaling a bounded reward to $[0,1]$ allows it to serve as an acceptance
probability. Conditioning on acceptance extends the MaxRL identity to
the joint response--acceptance distribution; standard RL is recovered
at first order.
\end{insight}

Under the same policy assumptions, let the fixed reward satisfy
$a\le r(x,y)\le b$, with constants $a<b$, and set
$s(x,y)=(r(x,y)-a)/(b-a)$.
Introduce $A\mid x,Y\sim\operatorname{Bernoulli}(s(x,Y))$, with kernel
$K_s(1\mid x,y)=s(x,y)$, $K_s(0\mid x,y)=1-s(x,y)$. Define
\begin{equation}
 P_\theta(y,A\mid x)=\pi_\theta(y\mid x)K_s(A\mid x,y),
 \qquad
 \rho_\theta^s(x)=P_\theta(A=1\mid x)
 =\mathbb E_{\pi_\theta(\cdot\mid x)}[s(x,Y)].
 \label{eq:app-randomized-joint}
\end{equation}
Assume $\rho_\theta^s(x)>0$ and condition on $A=1$:
\begin{equation}
 Q_\theta(y,A\mid x)
 =P_\theta(y,A\mid x,A=1)
 =\frac{\pi_\theta(y\mid x)s(x,y)}{\rho_\theta^s(x)}
   \mathbf1\{A=1\}.
 \label{eq:app-randomized-target}
\end{equation}
Since $Q_\theta/P_\theta=1/\rho_\theta^s(x)$ on the support of $Q_\theta$,
\begin{equation}
 -\mathbb E_{x\sim\mathcal D}
 D_{\mathrm{KL}}\bigl(Q_\theta(\cdot,\cdot\mid x)
                 \Vert P_\theta(\cdot,\cdot\mid x)\bigr)
 =\mathbb E_{x\sim\mathcal D}
   \left[\log\mathbb E_{\pi_\theta(\cdot\mid x)}[s(x,Y)]\right].
 \label{eq:app-randomized-kl}
\end{equation}
For the response marginal
$q_\theta^s(y\mid x)=\pi_\theta(y\mid x)s(x,y)/\rho_\theta^s(x)$,
the KL chain rule gives
\begin{equation}
 D_{\mathrm{KL}}(Q_\theta\Vert P_\theta)
 =D_{\mathrm{KL}}(q_\theta^s\Vert\pi_\theta)
  +\mathbb E_{q_\theta^s}[-\log s(x,Y)].
 \label{eq:app-randomized-marginal-kl}
\end{equation}
The expectation is over $s(x,y)>0$. For binary $r$ with $s=r$,
the extra term vanishes, recovering Eq.~\eqref{eq:conditional-kl-maxrl}.
Finally, $\log\rho=(\rho-1)+O((1-\rho)^2)$ near $\rho=1$ gives
standard RL on $s$ at first order, and
$\mathbb E[s]=(\mathbb E[r]-a)/(b-a)$.

\begin{table}[!htbp]
\centering
\caption[Distribution matching for bounded nonbinary rewards]{Continuation
of Table~\ref{tab:distribution-matching}: bounded nonbinary rewards.
Objectives are averaged over prompts $x\sim\mathcal D$. The model distribution is $P_\theta$; the target
$Q_\theta$ conditions it on acceptance and has $Q_\theta(y,0\mid x)=0$.
$^{\dagger}$\,RL is its first-order approximation.}
\label{tab:distribution-matching-nonbinary}
\vspace{5pt}
\begingroup
\small
\setlength{\tabcolsep}{5pt}
\renewcommand{\arraystretch}{1.45}
\begin{tabularx}{\linewidth}{
 >{\raggedright\arraybackslash}p{0.13\linewidth}
 >{\centering\arraybackslash}X
 >{\centering\arraybackslash}p{0.235\linewidth}
 >{\raggedright\arraybackslash}p{0.16\linewidth}}
\specialrule{\heavyrulewidth}{0pt}{0pt}
\rowcolor{frameworktint}
\textcolor{frameworkink}{\textbf{Algorithm}} &
\textcolor{frameworkink}{\textbf{Objective function}} &
\textcolor{frameworkink}{\textbf{Target distribution}} &
\textcolor{frameworkink}{\textbf{Divergence}} \\
\algorithmgroup{Iterative algorithms}
RL &
$\mathbb E_{\pi_\theta}[s(x,Y)]=\rho_\theta^s(x)$ &
$\begin{gathered}
 Q_\theta(y,1\mid x)\\
 =\dfrac{s(x,y)\targetpolicy(y\mid x)}{\rho_\theta^s(x)}
\end{gathered}$ &
\forwardKL$^{\dagger}$ \\
\addlinespace[4pt]
MaxRL (extended) &
$\begin{gathered}
 \log\mathbb E_{\pi_\theta}[s(x,Y)]\\
 =-D_{\mathrm{KL}}(Q_\theta\Vert P_\theta)
\end{gathered}$ &
$\begin{gathered}
 Q_\theta(y,1\mid x)\\
 =\dfrac{s(x,y)\targetpolicy(y\mid x)}{\rho_\theta^s(x)}
\end{gathered}$ &
\forwardKL \\
\bottomrule
\end{tabularx}
\endgroup
\end{table}
\FloatBarrier

\section{Proof of Gradient Noise at the Proximal Limit}
\label{app:proximal-gradient-noise}

\begin{insight}
When student and teacher coincide, every OPD log-ratio weight is zero,
whereas the sampled KD scores cancel only in expectation.
\end{insight}

Fix a prompt $x$, a finite response space, and a fixed positive teacher
$\pi_t$. Let $\pi_\theta>0$ be continuously differentiable near
$\theta_0$, with $\pi_{\theta_0}=\pi_t$.
Write $s_\theta(y\mid x)=\nabla_\theta\log\pi_\theta(y\mid x)$ and
$w_\theta(y\mid x)=\log[\pi_t(y\mid x)/\pi_\theta(y\mid x)]$.
For $m\ge1$, define the sampled-response estimators
\begin{align}
 \widehat g_{\mathrm{KD}}(\theta)
 &=\frac1m\sum_{i=1}^m s_\theta(Y_i\mid x),
 &Y_i&\overset{\mathrm{iid}}{\sim}\pi_t(\cdot\mid x),
 \label{eq:proximal-kd-estimator}\\
 \widehat g_{\mathrm{OPD}}(\theta)
 &=\frac1m\sum_{i=1}^m w_\theta(\widetilde Y_i\mid x)
                          s_\theta(\widetilde Y_i\mid x),
 &\widetilde Y_i&\overset{\mathrm{iid}}{\sim}\pi_\theta(\cdot\mid x).
 \label{eq:proximal-opd-estimator}
\end{align}

\begin{proof}[Proof of Lemma~\ref{lem:proximal-gradient-noise}]
Since $\mathbb E_{\pi_\theta}[s_\theta]=\nabla_\theta\sum_y\pi_\theta(y\mid x)=0$,
\begin{align}
 \nabla_\theta[-D_{\mathrm{KL}}(\pi_t\Vert\pi_\theta)]
 &=\mathbb E_{\pi_t}[s_\theta],
 \label{eq:proximal-kd-gradient}\\
 \nabla_\theta[-D_{\mathrm{KL}}(\pi_\theta\Vert\pi_t)]
 &=\mathbb E_{\pi_\theta}[(w_\theta-1)s_\theta]
 =\mathbb E_{\pi_\theta}[w_\theta s_\theta],
 \label{eq:proximal-opd-gradient}
\end{align}
so both estimators are unbiased. At $\theta_0$, $w_{\theta_0}=0$ and
$\mathbb E_{\pi_t}[s_{\theta_0}]=0$. Therefore
$\widehat g_{\mathrm{OPD}}(\theta_0)=0$ almost surely, while independence gives
\begin{equation}
 \operatorname{Cov}(\widehat g_{\mathrm{KD}}(\theta_0))
 =\frac1m\mathbb E_{\pi_t}[s_{\theta_0}s_{\theta_0}^{\top}]
 =\frac1m F_x(\theta_0).
 \label{eq:app-proximal-fisher}
\end{equation}
For $A\in\{\mathrm{KD},\mathrm{OPD}\}$, set
$V_A(\theta)=\mathbb E\|\widehat g_A(\theta)
-\mathbb E\widehat g_A(\theta)\|_2^2$.
Then
\begin{align}
 mV_{\mathrm{KD}}(\theta)
 &=\mathbb E_{\pi_t}\|s_\theta\|_2^2
       -\|\mathbb E_{\pi_t}s_\theta\|_2^2,\nonumber\\
 mV_{\mathrm{OPD}}(\theta)
 &=\mathbb E_{\pi_\theta}[w_\theta^2\|s_\theta\|_2^2]
       -\|\mathbb E_{\pi_\theta}[w_\theta s_\theta]\|_2^2.
 \label{eq:proximal-estimation-error}
\end{align}
These finite sums are continuous in $\theta$. If
$\operatorname{Tr}F_x(\theta_0)>0$, then
$V_{\mathrm{OPD}}(\theta_0)=0<V_{\mathrm{KD}}(\theta_0)$, so the strict
inequality persists in a neighborhood of $\theta_0$.
\end{proof}

\section{A Conditional Variance Comparison in the Distal Regime}
\label{app:distal-gradient-variance}

\begin{insight}
Suppressing teacher probability on a set still sampled by the student
introduces a growing $\log(1/\epsilon)$ term in the OPD weights.
Separating this term from a bounded remainder reveals quadratic growth
of OPD variance under the stated nonzero-score condition, while KD variance
remains bounded by the fixed student scores.
\end{insight}

Fix $x$, a finite response space $\mathcal Y_x$, and a positive policy
$\pi_\theta$ continuously differentiable near $\bar\theta$. Write
$p_y=\pi_{\bar\theta}(y\mid x)$ and
$s_y=\nabla_\theta\log\pi_\theta(y\mid x)|_{\bar\theta}$.
For a nonempty proper subset $\mathcal A$ of $\mathcal Y_x$, set
$a=\sum_{y\in\mathcal A}p_y\in(0,1)$. Given fixed positive distributions
$u$ on $\mathcal A$ and $v$ on its complement, define
\begin{equation}
 \pi_{t,\epsilon}(y\mid x)=
 \begin{cases}
   \epsilon u_y,&y\in\mathcal A,\\
   (1-\epsilon)v_y,&y\notin\mathcal A,
 \end{cases}
 \qquad 0<\epsilon<1.
 \label{eq:distal-teacher-family}
\end{equation}
With $m$ independent responses and teacher $\pi_{t,\epsilon}$, let
$V_A(\epsilon)=\operatorname{Tr}\operatorname{Cov}(\widehat g_A(\bar\theta))$
for the estimators in
Eqs.~\eqref{eq:proximal-kd-estimator}--\eqref{eq:proximal-opd-estimator}.

\begin{lemma}[Variance amplification under teacher-probability suppression]
\label{lem:distal-gradient-variance}
If $\sum_{y\in\mathcal A}p_y\|s_y\|_2^2>0$, then, as $\epsilon\to0$,
\begin{equation}
 V_{\mathrm{OPD}}(\epsilon)
 =\frac{C_{\mathcal A}}m\log^2\frac1\epsilon
       +O\!\left(\frac{\log(1/\epsilon)}m\right),
 \qquad V_{\mathrm{KD}}(\epsilon)\le\frac{G^2}{m},
 \label{eq:distal-variance-growth}
\end{equation}
where $G=\max_y\|s_y\|_2$ and
\begin{equation}
 C_{\mathcal A}
 =\sum_{y\in\mathcal A}p_y\|s_y\|_2^2
       -\left\|\sum_{y\in\mathcal A}p_ys_y\right\|_2^2>0.
 \label{eq:distal-variance-coefficient}
\end{equation}
Consequently, for every fixed $R>0$, sufficiently small $\epsilon$
gives $V_{\mathrm{OPD}}(\epsilon)>R V_{\mathrm{KD}}(\epsilon)$.
\end{lemma}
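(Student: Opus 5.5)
The plan is to compute both variances exactly from Eq.~\eqref{eq:proximal-estimation-error}, evaluated at $\bar\theta$ with teacher $\pi_{t,\epsilon}$. Independence of the $m$ samples gives
\[
mV_{\mathrm{OPD}}(\epsilon)=\mathbb E_p[w^2\|s\|_2^2]-\|\mathbb E_p[ws]\|_2^2,
\qquad
mV_{\mathrm{KD}}(\epsilon)=\mathbb E_{\pi_{t,\epsilon}}\|s\|_2^2-\|\mathbb E_{\pi_{t,\epsilon}}s\|_2^2 .
\]
The KD bound is then immediate. The second expression is at most $\mathbb E_{\pi_{t,\epsilon}}\|s\|_2^2\le G^2$, since the scores $s_y$ do not depend on $\epsilon$; only the sampling distribution changes. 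This gives $V_{\mathrm{KD}}(\epsilon)\le G^2/m$ uniformly in $\epsilon$.

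For OPD, decompose the weight $w_y=\log(\pi_{t,\epsilon}(y)/p_y)$. On $\mathcal A$ we have $w_y=-L+b_y$, with $L=\log(1/\epsilon)$ and $b_y=\log(u_y/p_y)$ bounded independently of $\epsilon$. Off $\mathcal A$ we have $w_y=\log(1-\epsilon)+\log(v_y/p_y)$, which is bounded uniformly for $\epsilon\le 1/2$. Write $w=-L\,\mathbf 1_{\mathcal A}+c$, where $c$ is bounded uniformly in $\epsilon$, and substitute into the variance formula:
\[
\mathbb E_p[w^2\|s\|^2]=L^2\sum_{y\in\mathcal A}p_y\|s_y\|^2-2L\,\mathbb E_p[\mathbf 1_{\mathcal A}c\|s\|^2]+O(1),
\]
\[
\|\mathbb E_p[ws]\|^2=L^2\Bigl\|\sum_{y\in\mathcal A}p_ys_y\Bigr\|^2+O(L).
\]
The cross terms are $O(L)$ because $c$, $p$ and $s$ are all fixed and bounded. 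Subtracting gives $mV_{\mathrm{OPD}}=C_{\mathcal A}L^2+O(L)$, with $C_{\mathcal A}$ as in Eq.~\eqref{eq:distal-variance-coefficient}, which is the claimed expansion.

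The main obstacle is showing $C_{\mathcal A}>0$, since the hypothesis only says $\sum_{\mathcal A}p_y\|s_y\|^2>0$. I would use Cauchy--Schwarz with the sub-probability weight $a<1$:
\[
\Bigl\|\sum_{\mathcal A}p_ys_y\Bigr\|^2\le a\sum_{\mathcal A}p_y\|s_y\|^2 .
\]
This yields $C_{\mathcal A}\ge(1-a)\sum_{\mathcal A}p_y\|s_y\|^2>0$, so the hypothesis suffices and no further assumption is needed.

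Finally, for any fixed $R>0$,
\[
V_{\mathrm{OPD}}/V_{\mathrm{KD}}\ \ge\ \bigl(C_{\mathcal A}L^2+O(L)\bigr)/G^2\to\infty .
\]
Here $G>0$ follows from the hypothesis. The case $V_{\mathrm{KD}}=0$ needs a one-line remark: if $V_{\mathrm{KD}}=0$ for some $\epsilon$, the strict inequality holds trivially once $V_{\mathrm{OPD}}>0$, which is true for small $\epsilon$. So sufficiently small $\epsilon$ gives $V_{\mathrm{OPD}}(\epsilon)>R\,V_{\mathrm{KD}}(\epsilon)$.
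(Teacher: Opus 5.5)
Your proposal is correct and follows essentially the same route as the paper: the decomposition $w=-L\mathbf 1_{\mathcal A}+b_\epsilon$ with $b_\epsilon$ uniformly bounded for $\epsilon\le 1/2$, the expansion $mV_{\mathrm{OPD}}=L^2C_{\mathcal A}+O(L)$ under $Y\sim p$, the Cauchy--Schwarz bound $C_{\mathcal A}\ge(1-a)\sum_{y\in\mathcal A}p_y\|s_y\|_2^2>0$, and the KD bound $mV_{\mathrm{KD}}\le\mathbb E_{\pi_{t,\epsilon}}\|s_Y\|_2^2\le G^2$. Your explicit cross-term bookkeeping and the remark on $V_{\mathrm{KD}}=0$ only spell out what the paper compresses into ``expanding the variance''; comparing $V_{\mathrm{OPD}}$ directly with $RG^2/m$ would avoid that case split altogether.
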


\begin{proof}
Set $L=\log(1/\epsilon)$ and
$w_\epsilon(y)=\log[\pi_{t,\epsilon}(y\mid x)/p_y]$. Then
\begin{equation}
 w_\epsilon(y)=-L\mathbf1_{\mathcal A}(y)+b_\epsilon(y),
 \qquad
 b_\epsilon(y)=
 \begin{cases}
   \log(u_y/p_y),&y\in\mathcal A,\\
   \log(v_y/p_y)+\log(1-\epsilon),&y\notin\mathcal A.
 \end{cases}
 \label{eq:distal-weight-decomposition}
\end{equation}
For $0<\epsilon\le1/2$, $b_\epsilon$ and $s$ are uniformly bounded.
Expanding the variance under $Y\sim p$ gives
\begin{equation}
 mV_{\mathrm{OPD}}(\epsilon)
 =L^2\operatorname{Tr}\operatorname{Cov}_p
       (\mathbf1_{\mathcal A}(Y)s_Y)+O(L)
 =L^2 C_{\mathcal A}+O(L).
 \label{eq:distal-variance-expansion}
\end{equation}
Cauchy--Schwarz yields
$C_{\mathcal A}\ge(1-a)\sum_{y\in\mathcal A}p_y\|s_y\|_2^2>0$.
For KD,
$mV_{\mathrm{KD}}(\epsilon)
\le\mathbb E_{\pi_{t,\epsilon}}\|s_Y\|_2^2\le G^2$,
which proves Eq.~\eqref{eq:distal-variance-growth} and the comparison
for any fixed $R>0$. Also,
$D_{\mathrm{KL}}(p\Vert\pi_{t,\epsilon})=aL+O(1)\to\infty$.
\end{proof}

\section{Martingales and Entropy in Self-Distillation}
\label{app:variance-accumulation}

\begin{insight}
Sampling from the current policy makes the score update have conditional
mean zero, so the parameters form a martingale despite nonzero sampling noise.
\end{insight}

For Lemmas~\ref{lem:self-distillation-martingale}--\ref{lem:self-distillation-entropy}, fix a prompt $x$ and a
finite response space $\mathcal Y_x$. Assume a positive, twice continuously
differentiable policy, integrable iterates and increments, and
$\mathcal F_k$-measurable step sizes $\eta_k$ and batch sizes $m_k\ge1$.
With the teacher frozen within each step, plain SGD gives
\begin{equation}
 \theta_{k+1}=\theta_k+\eta_k\widehat g_k,
 \qquad
 \widehat g_k=\frac1{m_k}\sum_{i=1}^{m_k}s_{\theta_k}(Y_{k,i}\mid x),
 \qquad Y_{k,i}\mid\mathcal F_k
     \overset{\mathrm{iid}}{\sim}\pi_k,
 \label{eq:self-distillation-sgd}
\end{equation}
where $\pi_k(y)=\pi_{\theta_k}(y\mid x)$ and
$s_\theta=\nabla_\theta\log\pi_\theta$.

\begin{proof}[Proof of Lemma~\ref{lem:self-distillation-martingale}]
The score identity and conditional independence yield
\begin{equation}
 \mathbb E[\widehat g_k\mid\mathcal F_k]=0,
 \qquad
 \operatorname{Cov}(\theta_{k+1}-\theta_k\mid\mathcal F_k)
 =\frac{\eta_k^2}{m_k}F_x(\theta_k),
 \label{eq:self-distillation-sgd-covariance}
\end{equation}
so $\mathbb E[\theta_{k+1}\mid\mathcal F_k]=\theta_k$.
If $M_{k,y}$ bounds the operator norm of the probability Hessian along the update segments,
Taylor expansion gives
\begin{equation}
 \left|\mathbb E[\pi_{k+1}(y)\mid\mathcal F_k]-\pi_k(y)\right|
 \le \frac{M_{k,y}\eta_k^2}{2m_k}\operatorname{Tr}F_x(\theta_k)
 =O(\eta_k^2),
 \label{eq:self-distillation-probability-remainder}
\end{equation}
locally as $\eta_k\to0$ for fixed $\theta_k$.
\end{proof}

For the entropy result, let $h_x(\theta)=-\sum_y\pi_\theta(y\mid x)
\log\pi_\theta(y\mid x)$ and $d_k=\theta_{k+1}-\theta_k$.
Assume that, for every possible update $d_k$ and $t\in[0,1]$,
\begin{equation}
 d_k^\top\nabla_\theta^2h_x(\theta_k+t d_k)d_k
 \le-\mu_k\|d_k\|_2^2,
 \qquad \mu_k\ge0,
 \label{eq:self-distillation-entropy-curvature}
\end{equation}
where $\mu_k$ is $\mathcal F_k$-measurable. The case $\mu_k=0$
is concavity along the update segments; $\mu_k>0$ gives strict decrease
whenever the update variance is nonzero.

\begin{insight}
Mean-zero parameter noise can lower expected entropy when entropy is
concave along the possible update segments.
In the Taylor expansion, the linear term vanishes in conditional expectation,
leaving a nonpositive curvature term whose magnitude is bounded using the Fisher covariance.
\end{insight}

\begin{proof}[Proof of Theorem~\ref{lem:self-distillation-entropy}]
Taylor's formula with \eqref{eq:self-distillation-entropy-curvature},
$\mathbb E[d_k\mid\mathcal F_k]=0$, and
\eqref{eq:self-distillation-sgd-covariance} yields
\begin{equation}
 \mathbb E[H(\pi_{k+1})\mid\mathcal F_k]
 \le H(\pi_k)-\frac{\mu_k\eta_k^2}{2m_k}
       \operatorname{Tr}F_x(\theta_k)
 \le H(\pi_k).
 \label{eq:self-distillation-entropy-bound}
\end{equation}
Taking expectations proves the claim, with strict decrease if the
subtracted term is positive with positive probability.
\end{proof}

\section{Proof of the Accuracy-Weighted Entropy Decomposition}
\label{app:rl-target-entropy}

Fix a prompt $x$, a finite response space, and a deterministic binary reward.
Write $\rho=\rho_\theta(x)\in(0,1)$,
$p=\pi_\theta(\cdot\mid x)$, and $p^\pm=\pi_\theta^\pm(\cdot\mid x)$.
Then
\begin{equation}
 p^+(y)=\frac{r(x,y)p(y)}{\rho},
 \qquad
 p^-(y)=\frac{(1-r(x,y))p(y)}{1-\rho}.
 \label{eq:rl-success-failure-distributions}
\end{equation}

\begin{proof}[Proof of Lemma~\ref{lem:rl-target-entropy}]
Since $p^+$ and $p^-$ have disjoint supports and
$p=\rho p^++(1-\rho)p^-$, using $0\log0=0$ gives
\begin{align*}
 H(p)
 &=-\rho\sum_y p^+(y)\log\bigl(\rho p^+(y)\bigr)
   -(1-\rho)\sum_y p^-(y)\log\bigl((1-\rho)p^-(y)\bigr)\\
 &=h_{\mathrm b}(\rho)+\rho H(p^+)+(1-\rho)H(p^-),
\end{align*}
which proves Eq.~\eqref{eq:rl-entropy-decomposition}.
For fixed $p^+$ and $p^-$ with $H(p^+)<H(p^-)$,
\begin{equation*}
 \frac{\mathrm d}{\mathrm d\rho}
 \bigl[\rho H(p^+)+(1-\rho)H(p^-)\bigr]
 =H(p^+)-H(p^-)<0.
\end{equation*}
\end{proof}

\section{Proof of the Conditional KL Estimator}
\label{app:ckl-estimator}

\begin{insight}
On successful responses, the sequence log-ratio is the sum of a
success-probability log-ratio and a conditional-distribution log-ratio.
Centering within successes removes the former exactly. Multiplying the
remaining signal by the policy score gives the conditional-KL gradient.
Using the same samples to estimate the mean introduces a finite-group
scaling factor, with zero signal for groups containing fewer than two successes.
\end{insight}

Fix a prompt $x$ and a finite response space with a deterministic binary
reward and a nonempty success set $\mathcal S^+$.
Write $p_\theta=\pi_\theta(\cdot\mid x)$, $p=p_{\theta_0}$,
$q=\pi_{\mathrm{ref}}(\cdot\mid x)$, $\rho_\theta=p_\theta(\mathcal S^+)$,
$\rho=\rho_{\theta_0}$, and $\rho_q=q(\mathcal S^+)$.
Assume $p_\theta$ is differentiable and both $p_\theta$ and $q$ are positive.
Superscript $+$ denotes conditioning on $\mathcal S^+$;
all gradients below are evaluated at $\theta_0$.

\begin{proof}[Proof of Theorem~\ref{thm:ckl-estimator}]
Let $d(y)=\log(p(y)/q(y))$, $\mu=\mathbb E_{p^+}[d(Y)]$, and
$K(\theta)=D_{\mathrm{KL}}(p_\theta^+\Vert q^+)$.
For $y\in\mathcal S^+$,
\begin{equation}
 d(y)=\log\frac{\rho}{\rho_q}+\log\frac{p^+(y)}{q^+(y)},
 \qquad
 d(y)-\mu=\log\frac{p^+(y)}{q^+(y)}-K(\theta_0).
 \label{eq:ckl-log-ratio-decomposition}
\end{equation}
The scores $s=\nabla\log p_\theta$ and $s^+=\nabla\log p_\theta^+$ satisfy
$s=s^++\nabla\log\rho_\theta$ on $\mathcal S^+$ and
$\mathbb E_{p^+}[s^+]=0$. Hence
\begin{align}
 \nabla K
 &=\mathbb E_{p^+}\!\left[
 s^+(Y)\left(\log\frac{p^+(Y)}{q^+(Y)}+1\right)\right]\notag\\
 &=\mathbb E_{p^+}[s^+(Y)(d(Y)-\mu)]
  =\mathbb E_{p^+}[s(Y)(d(Y)-\mu)].
 \label{eq:ckl-conditional-gradient}
\end{align}

Conditional on $N=n\ge2$, relabel the successful responses as
$Y_1,\ldots,Y_n\overset{\mathrm{iid}}{\sim}p^+$.
Set $\bar d=n^{-1}\sum_{j=1}^n d(Y_j)$,
$a=\mathbb E_{p^+}[d(Y)s(Y)]$, and $b=\mathbb E_{p^+}[s(Y)]$.
Independence gives
\begin{align}
 \mathbb E[\bar d\,s(Y_i)\mid N=n]
 &=\frac{a}{n}+\frac{n-1}{n}\mu b,\notag\\
 \mathbb E[(d(Y_i)-\bar d)s(Y_i)\mid N=n]
 &=\frac{n-1}{n}(a-\mu b)
  =\frac{n-1}{n}\nabla K.
 \label{eq:ckl-sample-centering-factor}
\end{align}
Summing over the $n$ successes and dividing by $G$ proves
Eq.~\eqref{eq:ckl-finite-gradient}; for $n\le1$, both sides are zero.
Multiplication by $-\beta$ gives the stated expected gradient contribution.
Finally, $(d_i+a_0)-(\bar d+a_0)=d_i-\bar d$ proves invariance
under a common shift of the successful log-ratios.
\end{proof}

\section{Supplementary Experiments on Proximal and Distal Distillation}
\label{app:distillation-experiments}

We report supplementary results for distillation from identical, nearby,
and more distant teachers.

\subsection{Experimental settings}

The proximal-limit experiment initializes both student and frozen teacher
from Qwen3-1.7B-Base~\citep{yang_qwen3_2025}. Nearby teachers are obtained by training the same
model with GRPO for 50, 100, or 200 steps. The distal experiments use a
JustRL teacher with R1-Distill-1.5B~\citep{deepseek-ai_deepseek-r1_2025} and Qwen2.5-Math-1.5B~\citep{qwen_qwen25_2025} students;
both use the same cached teacher responses for SFT. Training uses slime
with Megatron and SGLang at revision \texttt{a067ce6f}, with:
\begin{itemize}
  \item seed: 1;
  \item Qwen3 experiments: DAPO17k~\citep{yu_dapo_2025}, maximum response length 2,048;
  \item JustRL experiments: JustRL training data~\citep{he_justrl_2025}, maximum response length 8,192;
  \item learning rate: $10^{-5}$.
\end{itemize}

\paragraph{Distillation from nearby teachers.} With each of the three GRPO-trained teachers, OPD achieves a higher mean
evaluation score than SFT (Table~\ref{tab:distillation-performance}),
consistent with its proximal advantage in these single-seed experiments.
%

\paragraph{Distillation with a larger initial performance gap.} With the JustRL teacher, both methods bring R1-Distill-1.5B close to
teacher performance, with SFT slightly ahead. For Qwen2.5-Math-1.5B,
OPD rollout correctness declines and remains low
(Figure~\ref{fig:distillation-distal}), while SFT improves the mean
evaluation score from 15.27 to 47.46 (Table~\ref{tab:distillation-performance}).

\begin{figure}[htbp]
\centering
\includegraphics[width=\linewidth]{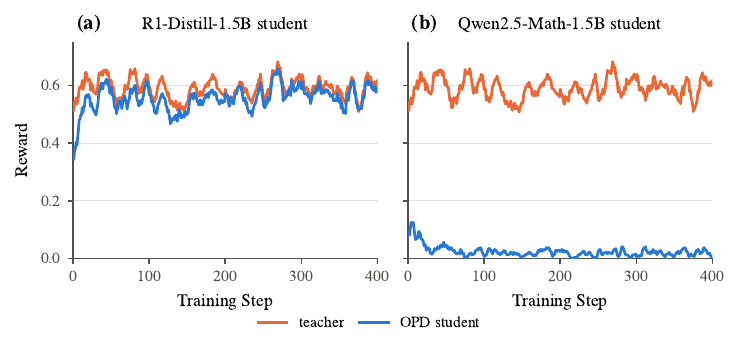}
\caption{Reward curve of distal OPD with the JustRL teacher and two initial students.
OPD rollout correctness increases for R1-Distill-1.5B but decreases
and remains low for Qwen2.5-Math-1.5B. Note that since we do not have student's on policy samples in SFT, we can not draw a similar reward curve for SFT. Their
evaluation results are reported in
Table~\ref{tab:distillation-performance}.}
\label{fig:distillation-distal}
\end{figure}

\clearpage
\subsection{Evaluation results}

\begin{table}[htbp]
\centering
\caption{Evaluation results of OPD and SFT with 16 sampled responses per problem. We can see that OPD typically outperforms SFT in proximal distillation, while it may experience severe instability issue and performance degradation in distal distillation. Proximal teachers are obtained by training the base model with GRPO for 50, 100, or 200 steps.}
\label{tab:distillation-performance}
\vspace{5pt}
\begingroup
\small
\setlength{\tabcolsep}{5pt}
\renewcommand{\arraystretch}{1.45}
\newcommand{\distillationgroup}[1]{%
  \specialrule{\lightrulewidth}{2pt}{0pt}%
  \multicolumn{7}{l}{\textcolor{frameworkink}{\textbf{#1}}}\\%
  \specialrule{\lightrulewidth}{0pt}{2pt}}
\newcommand{\distillationcell}[2]{%
  \multirow[c]{4}{#1}[-\dimexpr(\aboverulesep+\belowrulesep+\cmidrulewidth)/2\relax]{#2}}
  \begin{tabularx}{\linewidth}{>{\raggedright\arraybackslash}X l<{\hspace{14pt}} l r r r r}
\specialrule{\heavyrulewidth}{0pt}{0pt}
\rowcolor{frameworktint}
\textcolor{frameworkink}{\textbf{Student}} &
\textcolor{frameworkink}{\textbf{Teacher}} &
\textcolor{frameworkink}{\textbf{Model}} &
\textcolor{frameworkink}{\textbf{AIME-24}} &
\textcolor{frameworkink}{\textbf{AMC23}} &
\textcolor{frameworkink}{\textbf{MATH500}} &
\textcolor{frameworkink}{\textbf{Mean}} \\
\distillationgroup{Proximal Distillation}
\distillationcell{=}{Qwen3-1.7B-Base} & \distillationcell{*}{GRPO-50} & Student & 0.62 & 7.50 & 16.19 & 8.10 \\
 & & Teacher & 1.04 & 12.19 & 25.62 & 12.95 \\
\cmidrule(l){3-7}
 & & SFT & \textbf{1.88} & \underline{15.78} & \underline{32.69} & \underline{16.78} \\
 & & OPD & \underline{1.25} & \textbf{16.25} & \textbf{35.80} & \textbf{17.77} \\
\midrule
\distillationcell{=}{Qwen3-1.7B-Base} & \distillationcell{*}{GRPO-100} & Student & 0.62 & 7.50 & 16.19 & 8.10 \\
 & & Teacher & \textbf{1.67} & \underline{18.28} & 33.50 & 17.82 \\
\cmidrule(l){3-7}
 & & SFT & \underline{1.04} & \textbf{19.06} & \underline{39.32} & \underline{19.81} \\
 & & OPD & 0.62 & \textbf{19.06} & \textbf{42.58} & \textbf{20.75} \\
\midrule
\distillationcell{=}{Qwen3-1.7B-Base} & \distillationcell{*}{GRPO-200} & Student & 0.62 & 7.50 & 16.19 & 8.10 \\
 & & Teacher & 2.50 & 22.50 & 46.27 & 23.76 \\
\cmidrule(l){3-7}
 & & SFT & \textbf{3.33} & \underline{25.16} & \underline{49.73} & \underline{26.07} \\
 & & OPD & \underline{2.92} & \textbf{25.62} & \textbf{51.36} & \textbf{26.63} \\
\distillationgroup{Distal Distillation}
\distillationcell{=}{R1-Distilled-Qwen-1.5B} & \distillationcell{*}{JustRL-DS} & Student & 21.04 & 63.75 & 75.90 & 53.56 \\
 & & Teacher & \textbf{49.38} & 83.28 & 84.16 & \underline{72.27} \\
\cmidrule(l){3-7}
 & & SFT & \underline{47.92} & \textbf{85.16} & \textbf{85.74} & \textbf{72.94} \\
 & & OPD & 47.29 & \underline{83.44} & \underline{85.39} & 72.04 \\
\midrule
\distillationcell{=}{Qwen2.5-Math-1.5B} & \distillationcell{*}{JustRL-DS} & Student & 4.38 & 18.44 & 23.00 & 15.27 \\
 & & Teacher & \textbf{49.38} & \textbf{83.28} & \textbf{84.16} & \textbf{72.27} \\
\cmidrule(l){3-7}
 & & SFT & \underline{11.88} & \underline{56.09} & \underline{74.41} & \underline{47.46} \\
 & & OPD & 0.62 & 3.75 & 5.17 & 3.18 \\
\bottomrule
\end{tabularx}
\endgroup
\end{table}

%
%
\subsection{Supplimentary Experiments on Standard and Reversed Iterative Self-Distillation}\label{app:supp-exp-isd}

Here are some supplimentary experiments on two iterative self-distillation methods mentioned in the main body, See Figure~\ref{fig:supp-exp-isd}.

\begin{figure}[htb]
\centering
\includegraphics[width=\linewidth]{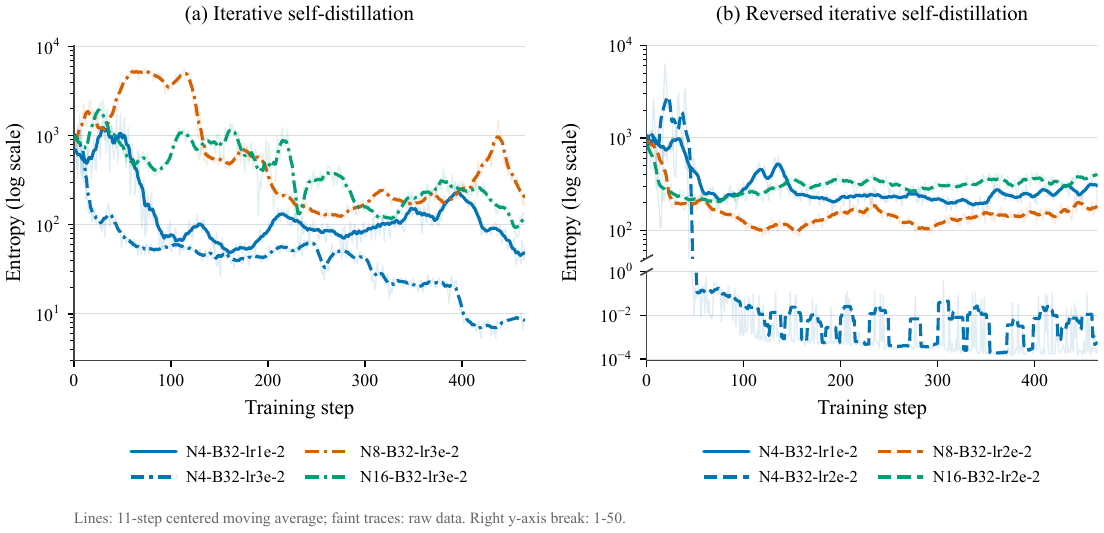}
\caption{Additional experiments on entropy dynamics of iterative self-distillation and reversed iterative self-distillation. In the legend ``N'' denotes the number of samples per request, ``B'' denotes batchsize and ``lr'' denotes the learning rate of SGD optimizer.}
\label{fig:supp-exp-isd}
\end{figure}

In a single training step, model entropy may randomly increase or decrease, but over the long run it exhibits a deterministic downward trend in expectation. These entropy curves satisfies the derivation in the main body very well. In fact the decreasing expected entropy does not guarantee the probability of decrement and increment. For example in reversed iterative self-distillation, since the punishment on samples with a high probability will raise the entropy, the entropy policy model has a high probability to be increased. However, the negative expectation indicates that once the decrement happens, the magnitude of the decrease will be substantial. This was also observed in our experiments.

These experiments share these experimental settings:

\begin{itemize}
\item seed: 1
\item total\_train\_epochs: 2
\item actor.path: Qwen/Qwen3-1.7B-Base
\item dataset: openai/gsm8k
\item actor.dtype: bfloat16
\item actor.kl\_ctl: 0
\item actor.optimizer.type: sgd
\item actor.optimizer.momentum: 0
\end{itemize}

Other hyperparameters are kept the same as the default value of AReaL, which can be found in our codebase.

\section{Supplimentary Experiments on Reinforcement Learning}\label{app:rl-exps}

\subsection{Additional Ablation Study on Advantage Estimation}

Here are some other supplimentary experiments on RL absent in the main body. They provide more evidence on the theoretical results obtained by our analysis and beyond.

\begin{figure}[htb]
  \centering
  \includegraphics[width=\linewidth]{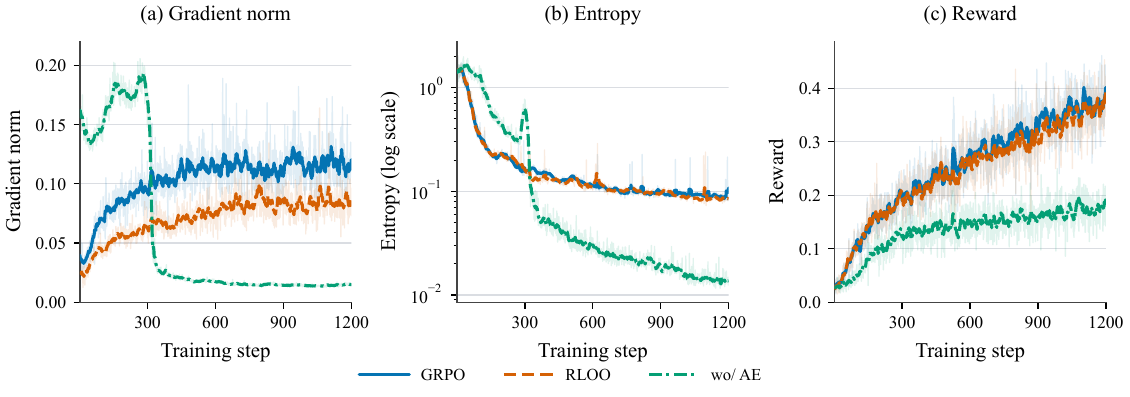}
  \caption{The ablation study on the advantage estimation on SmolLM2-360M-Instruct~\citep{allal_smollm2_2025} and GSM8K~\citep{cobbe_training_2021} dataset. with batchsize 128, n\-samples 4 and AdamW optimizer with learning rate 1e-5.}
  \label{fig:supp-exp-rl-smollm}
\end{figure}

In Figure~\ref{fig:supp-exp-rl-smollm} we also examined the variance effect on smaller models like SmolLM2-360M-Instruct. We used similar training settings like that in \citet{tajwar_maximum_2026}. We can see that advantage estimation methods effectively prevents entropy collapse and stabilizes the training process. These experiments share the following hyperparameter settings in AReaL~\citep{fu_areal_2025}, unmentioned hyperparameters are kept the same as the default setting:

\begin{itemize}
  \item seed: 1
  \item batchsize: 128
  \item n\_samples: 4
  \item total\_train\_steps: 1200
  \item actor.optimizer.lr: 1e-5
  \item actor.reward\_scaling: 2.0
  \item actor.reward\_bias: -0.5
\end{itemize}

\subsection{Additional Results on Conditional KL Restriction}

\begin{figure}[htb]
  \centering
  \includegraphics[width=\linewidth]{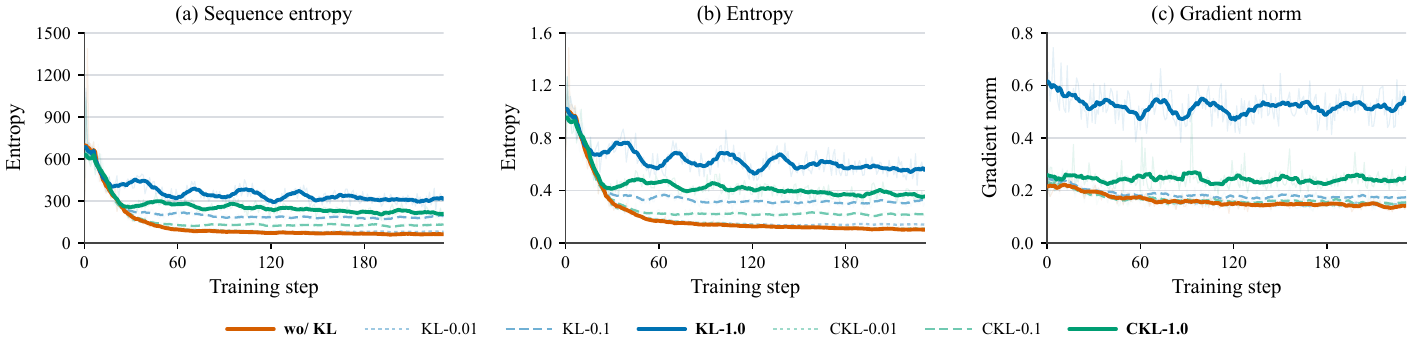}
  \caption{Additional metrics on the comparison of KL and CKL restrictions.}
  \label{fig:addi-kl-ckl}
\end{figure}

The results in Figure~\ref{fig:addi-kl-ckl} come from the same set of experiments as that in Figure~\ref{fig:kl-ckl-comparison}. We can see that the sequence-level entropy showed the same decreasing trend as that of token-level entropy. Recall the entropy decomposition in Lemma~\ref{lem:rl-target-entropy}, the entropy of a model in RL is only comparable if they have similar accuracy. Otherwise the entropy value may not directly reflect the collapse in diversity. The gradient norm behavior of these experiments also provides some explanation on the advantage of CKL restriction. With high restriction strength, the gradient of CKL is much more modest while it still preserves the diversity very well.

These experiments adopts these hyperparameter settings, while unmentioned ones use the default value in AReaL~\citep{fu_areal_2025}:

\begin{itemize}
  \item seed: 1
  \item batchsize: 128
  \item n\_samples: 4
  \item total\_train\_epochs: 4
  \item actor.optimizer.lr: 1e-6
  \item actor.reward\_scaling: 2.0
  \item actor.reward\_bias: -0.5
  \item actor.kl\_estimator: ``k1''
\end{itemize}

\begin{figure}[htb]
  \centering
  \includegraphics[width=\linewidth]{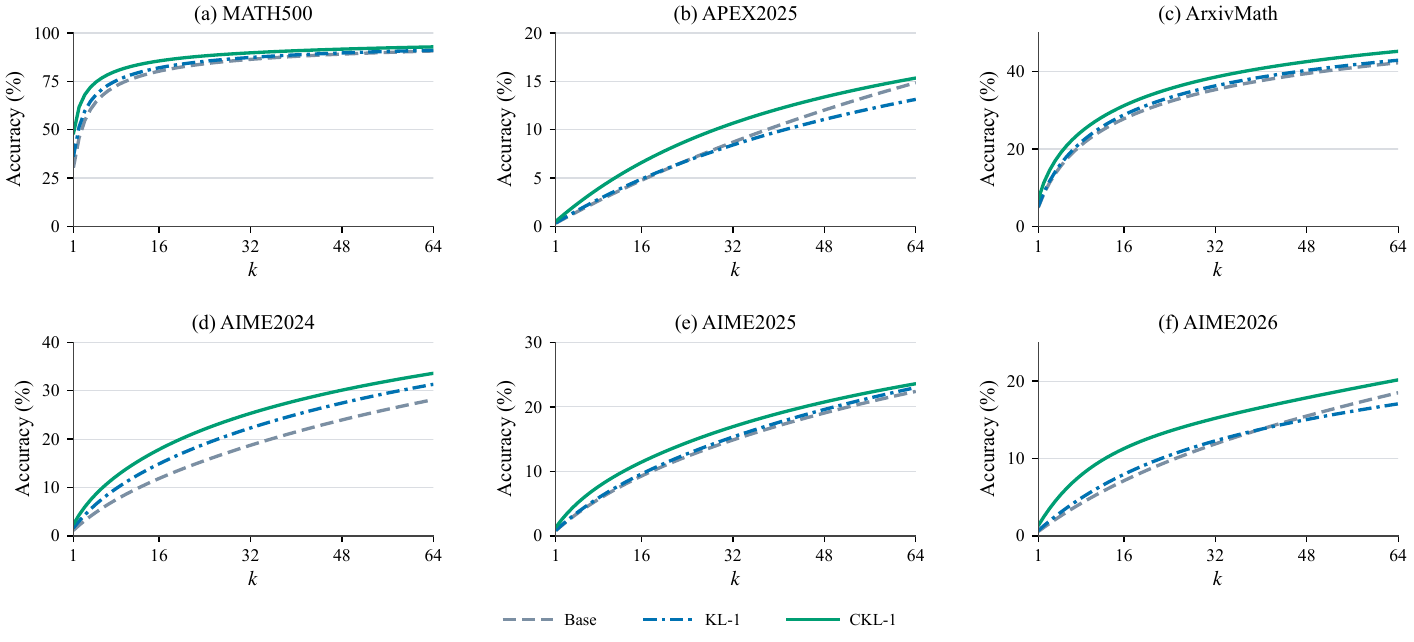}
  \caption{Pass@k performance of models trained for 4 epochs with KL or CKL regularization on MATH500~\citep{lightman_lets_2023}, Apex2025~\citep{balunovic_matharena_2025}, arxivmath~\citep{balunovic_matharena_2025}, and AIME~\citep{balunovic_matharena_2025} datasets. CKL regularization also preserves more learning potential in the sense of pass@k performance compared with KL regularization with the same strength. The evaluation settings is the same as that used in AReaL training pipeline.}
  \label{fig:passk-eval-full}
\end{figure}

\section{Further Discussions}
\label{app:further-discussions}

Our analysis of variance accumulation concerns entropy evolution during
prolonged language-model training. More broadly, it raises the possibility
that stochastic fluctuations can induce a deterministic drift toward lower
diversity in other iterative systems, including biological evolution and
technological development. For example, even when two technological
solutions are equally effective, random fluctuations in their early
exploration and adoption could become self-reinforcing, leaving one
insufficiently explored and producing a less diverse outcome. Such a
mechanism could contribute to path dependence without requiring an
intrinsic difference in the quality of the alternatives.

These connections remain speculative. Establishing whether variance
accumulation operates in such systems, and distinguishing its effects
from other mechanisms, would require domain-specific model and empirical
evidence that are difficult to obtain and beyond the scope of this work.
We mention these possibilities only to suggest broader directions for
investigating the phenomenon.

\end{document}